\documentclass[oneside, 11pt]{article}
\usepackage[T1]{fontenc}
\usepackage{polyglossia}
\setmainlanguage{english}
\usepackage{ctex}

\usepackage{fancyhdr}

\fancyhfoffset[L,R]{0pt} 
\usepackage{mfirstuc} 

\usepackage{jmlr2e_cus}
\usepackage[usenames]{xcolor}
\usepackage[bookmarksnumbered=true]{hyperref}
\usepackage{url}
\definecolor{cite_color}{HTML}{114083}
\definecolor{link_color}{RGB}{153, 0,0}  
\definecolor{url_color}{RGB}{153, 102,  0}
\definecolor{emp_color}{RGB}{0,0,255}
\hypersetup{
 colorlinks,
 citecolor=cite_color,
 linkcolor=link_color,
 urlcolor=url_color}
\graphicspath{{./}}

\usepackage{framed}
\definecolor{shadecolor}{rgb}{0.94, 0.97, 1.0}

\usepackage{epsfig}
\usepackage{amsmath}
\usepackage{amssymb}
\usepackage{mathrsfs}
\usepackage{graphicx}
\usepackage{subfig}
\usepackage{multirow}
\usepackage{booktabs}
\usepackage{wrapfig}
\usepackage{enumerate}
\usepackage{bm}
\usepackage{tabularx}

\usepackage[backend=biber,natbib=true,style=numeric,sorting=nyt,maxcitenames=4,maxbibnames=199,language=english]{biblatex}

\usepackage{mathtools}

\usepackage[vlined, ruled, linesnumbered]{algorithm2e}

\SetCommentSty{mycommfont}
\SetKwInOut{Input}{input}
\SetKwInOut{Output}{output}

 \usepackage{cleveref} 
 \crefname{section}{Section}{Sections}
 \crefname{theorem}{Theorem}{Theorems}
 \crefname{lemma}{Lemma}{Lemmas}
 \crefname{equation}{Equation}{Equations}
 \crefname{proposition}{Proposition}{Propositions}
 \crefname{claim}{Claim}{Claims}
\crefname{appendix}{Appendix}{Appendices}
   \crefname{algorithm}{Algorithm}{Algorithms}
 \crefname{figure}{Figure}{Figures}
 \crefname{table}{Table}{Tables}
 \crefname{remark}{Remark}{Remarks}
 \crefname{definition}{Definition}{Definitions}
 \crefname{equatinon}{Equation}{Equations}
 \crefname{corollary}{Corollary}{Corollaries}

\allowdisplaybreaks

\usepackage{thmtools}
\usepackage{thm-restate}

\let \oldtextcircled \textcircled
\renewcommand{\textcircled}[1]{\oldtextcircled{\footnotesize #1}}

\usepackage{enumitem}
\setlist[itemize]{leftmargin=9mm}

\newtheorem{lemma}{Lemma}[section]

\firstpageno{1}

\usepackage{setspace}
\usepackage[table]{xcolor}
\definecolor{darkgreen}{rgb}{0.0,0.5,0.0}
\definecolor{darkred}{rgb}{0.5,0.0,0.0}
\usepackage{wrapfig}



\usepackage{float}       
\usepackage{algorithm}   
\usepackage{algorithmic} 

\usepackage{hyperref}

\usepackage{etoc}
\etocdepthtag.toc{mtchapter}
\etocsettagdepth{mtchapter}{subsection}
\etocsettagdepth{mtappendix}{none}

\usepackage{caption} 

\begin{document}

\title{\LARGE  
Unified Energy for Invariant and Independent Decoding in Diffusion Language Models
}

\author{
 \normalfont{Yuchen Yan} \\
 Department of Computer Science\\
 National University of Singapore\\
 \texttt{yan1998@comp.nus.edu.sg}\\
\And
 \normalfont{Minkai Xu} \\
 Department of Computer Science \\
 Stanford University\\
 \texttt{minkai@cs.stanford.edu}\\
\AND
\normalfont{Zaiquan Yang} \\
Department of Computer Science\\
City University of Hong Kong\\
\texttt{zaiquyang2-c@my.cityu.edu.hk}\\
\And
\normalfont{Yatao Bian}\thanks{Correspondence to: Yatao Bian <\texttt{ybian@nus.edu.sg}>.} \\
 Department of Computer Science\\
 National University of Singapore\\
 \texttt{ybian@nus.edu.sg}\\
\AND
\normalfont{\today}
}

\maketitle

\begin{abstract}
Diffusion Language Models (DLMs) enable parallel text generation by iteratively denoising a full sequence, offering attractive flexibility compared to auto-regressive (AR) decoding. However, existing methods fail to fully capture token relationships, leading to a performance gap relative to AR baselines, especially as the degree of parallelism increases. In this paper, we give a systematic analysis of the gap, identifying three key factors: (i) model capacity, (ii) dependency, and (iii) invariance. To address these issues, we first propose an invariant energy (Inv-E) together with an effective sampling-based estimator to handle the invariance issue. By further combining with the independent energy (Ind-E), we obtain a unified energy (Uni-E), that accounts for all these factors. Uni-E enjoys a unique advantage: it can be computed exactly without sampling-based partition estimation. Besides, Uni-E is model agnostic and can therefore be scaled to models of arbitrary size. We further prove that Uni-E can correct the distribution shift caused by dependency and invariance. Extensive experiments across Diffusion Language Models (DLMs) and Diffusion Large Language Models (DLLMs) demonstrate the effectiveness of the proposed Uni-E.
\end{abstract}

\newpage 
{
\hypersetup{linkcolor=blue}
\tableofcontents
}
\clearpage

\section{Introduction}
\par Diffusion Language Models (DLMs)~\cite{li2025survey,yi2024diffusion} have emerged as a compelling alternative to the dominant auto-regressive (AR) paradigm in text generation, where tokens are generated in a fixed left-to-right order~\cite{zhao2023survey,chang2024survey,zou2023survey}. Unlike AR-based methods, DLMs iteratively decode tokens across the entire sequence in parallel, leading to higher flexibility, diversity, and efficiency. However, traditional DLMs~\cite{sahoo2024simple,austin2021structured,lou2023discrete,arriola2025block,sahoo2025diffusion} still underperform AR-based generation when decoding in parallel, as they assume tokens can be decoded independently and invariantly, failing to model their relationships. To handle this limitation, one line of research uses score guidance~\cite{ye2025dream,wu2025fast,li2026breaking,luo2026dawn} or proxy models~\cite{liu2024discrete,xu2024energy} to capture token relationships. Another line of research~\cite{wang2025remasking,yao2026remask,zhai2026core} proposes correcting decoding errors in later steps through remasking strategies. Nevertheless, these approaches either lack a systematic understanding of DLM limitations or primarily focus on modeling dependencies among decoded tokens, providing only partial improvements.

\par To fully address these limitations, we first conduct a theoretical analysis of DLMs. We model the expressive limitation of DLMs as a distribution shift between the diffusion process and the ground-truth. We find that this shift is governed by three factors: (i) the \emph{capacity factor}: the model's ability to fit the target distribution; (ii) the \emph{independency factor}: the model's ability to select independent tokens at each decoding step; and (iii) the \emph{invariance factor}: the model's ability to select tokens that can be totally determined by the current context and remain invariant to masked positions. Limitations in any of these factors can lead to distribution deviation and result in unreasonable outputs. We provide two simple examples in the top left panel of Figure~\ref{fig:framework}: (i) when the model lacks dependency awareness, it fails to capture relationships between tokens (e.g., "cat and mat" or "dog and bone"), leading to an implausible output such as "cat sits on the bone"; and (ii) when the model lacks invariance, it may decode "cat" without sufficient context, which should instead be "car". The first factor can be alleviated by increasing model scale, but a unified solution for all three factors is still lacking.

\par Based on this insight, we design an unified solution via energy function for all three factors. \textbf{First}, we focus on the invariance factor and generalize the re-weighting principle~\cite{zhou2022model} from causal inference to text generation. This principle encourages the model to learn invariant mappings by down-weighting non-invariant samples and emphasizing invariant ones. Specifically, we propose an invariant energy (Inv-E) function to implement this idea. Inv-E uses a proxy model to estimate possible future decoding over the entire sequence to capture the token invariance, where it assigns low energy to tokens that remain invariant under the current context and high energy otherwise. We further prove that this estimation can be achieved with a bounded error. \textbf{Second}, based on the principle of energy additivity~\cite{lecun2006tutorial,haarnoja2017reinforcement}, we integrate an existing energy function for the independency factor (Ind-E)~\cite{xu2024energy} with our Inv-E, resulting in an unified energy function (Uni-E). Uni-E is model-agnostic and can be parameterized at different scales using pretrained parameters (Uni-EDLM-AR) or NCE fine-tuning (Uni-EDLM-NCE). We also show that Uni-E has a unique advantage: its partition function can be computed analytically, enabling accurate and efficient energy computation without Markov Chain Monte Carlo (MCMC) sampling. \textbf{Third}, we theoretically analyze the gap to the ground-truth distribution and show that Uni-EDLM achieves a strictly smaller gap than existing DLMs in non-independent or non-invariant scenarios. To validate our method, we conduct experiments on multiple benchmark datasets for both DLMs and Diffusion Large Language Models (DLLMs). For DLMs, Uni-E consistently outperforms baselines and achieves around a 22\% speedup while maintaining generation quality comparable to AR models. For DLLMs, Uni-E improves performance across several benchmarks and significantly mitigates degradation as parallelism increases, consistently enhancing performance at different decoding speeds. The code is provided via link: \url{https://github.com/BlueWhaleLab/Uni-E}.

\section{Related Work}
\label{sec:related_work}
\noindent \textbf{Diffusion Language Models}. Diffusion Language Models (DLMs) are proposed as an alternative to auto-regressive (AR) text generation, allowing multiple tokens to be generated in parallel~\cite{li2025survey}. Early work mainly focused on extending diffusion process from continuous data to discrete text~\cite{austin2021structured,hoogeboom2021autoregressive,campbell2022continuous}, which led to the development of DLMs~\cite{lou2023discrete,austin2021structured,sahoo2024simple,arriola2025block}. Building on this, later studies scaled these models to larger sizes, developing Diffusion Large Language Models (DLLMs) that achieve performance close to Large Language Models (LLMs)~\cite{nie2025large,ye2025dream,bie2025llada2}. However, their performance still lags behind AR-based methods, especially when reducing the decoding steps. This is because they predict multiple tokens independently, making it hard to capture the relationships between them~\cite{li2026breaking}.

\noindent \textbf{Dependency-aware Parallel Decoding.} To address this, one line of research uses score guidance~\cite{ye2025dream,li2026breaking,luo2026dawn} or leverages a proxy model~\cite{liu2024discrete,xu2024energy,israel2025accelerating} to estimate token relationships. For example, EDLM~\cite{xu2024energy} introduces energy-based guidance from an AR model to compensate for dependency effects, while speculative decoding~\cite{israel2025accelerating} combines AR logits to improve generation. However, these methods mainly focus on dependency while overlooking another crucial factor: invariance. Another line of work designs remask strategy~\cite{wang2025remasking,yao2026remask,zhai2026core} to fix the decoding "error" in future steps, but these these methods are heuristic and unstable without theoretical guarantees, and generally require large decoding steps to perform well. We provide a detailed comparision with remask methods in Appendix~\ref{app:uni_e_comparison_remask_sec}. In this work, we provide a systematic analysis of the performance gap and propose a unified solution to capture all these factors.

\section{Preliminaries}
\textbf{Diffusion Language Model (DLM):} We denote the token sequence as $\mathbf{x}$ and we use subscript to denote the diffision time step and superscript to denote the position. For example, $\mathbf{x}_{t}^{i}$ represents the $i$-th token in the sequence at time step $t$. As for the diffusion process, we consider the time step range from 0 to 1 where $\mathbf{x}_{0}$ denotes the clean data and $\mathbf{x}_{1}$ denotes the fully noised data. Given discrete data $\mathbf{x}_{t-1}$ at time $t-1$, one diffusion step is defined as $q(\mathbf{x}_{t}|\mathbf{x}_{t-1}) \coloneqq Cat(\mathbf{x}_{t}; p = \mathbf{x}_{t-1} \mathbf{Q}_{t})$ where $ Cat(.;p) $ is a categorical distribution parameterized by $p$, and $\mathbf{Q}_{t}$ is the transition matrix. The forward process at time $t$ is defined as $q(\mathbf{x}_{t}|\mathbf{x}_{0}) \coloneqq Cat(\mathbf{x}_{t};p=\mathbf{x}_{0} \bar{\mathbf{Q}}_{t})$, where $\bar{\mathbf{Q}}_{t} \coloneqq \mathbf{Q}_{1} \mathbf{Q}_{2} ... \mathbf{Q}_{t}$. The reverse process at time $t$ is defined as: 
\begin{equation}
    q(\mathbf{x}_{t-1}|\mathbf{x}_{t},\mathbf{x}_{0}) \coloneqq \frac{q(\mathbf{x}_{t}|\mathbf{x}_{t-1},\mathbf{x}_{0})q(\mathbf{x}_{t-1}|\mathbf{x}_{0})}{q(\mathbf{x}_{t}|\mathbf{x}_{0})} = 
    Cat(\mathbf{x}_{t-1};p=\frac{\mathbf{x}_{t} \mathbf{Q}^{T}_{t} \odot \mathbf{x}_{0} \bar{\mathbf{Q}}_{t-1}}{\mathbf{x}_{0} \bar{\mathbf{Q}}_{t}\mathbf{x}^{T}_{t}}).
\label{eq:discrete_reverse}
\end{equation}
\par In this paper, we focus on masked diffusion language models (MDLMs) where the forward process replaces each unmasked token with a mask token $\mathbf{m}$ with probability $(1-\alpha_{t})$ at time $t$, thus the process between any two time steps $0<s<t<1$ is simplified as $q(\mathbf{x}_{t}|\mathbf{x}_{s}) = Cat(\mathbf{x}_{t};\alpha_{t/s}\mathbf{x}_{s} + (1-\alpha_{t/s})\mathbf{m})$ where $\alpha_{t/s}=\alpha_{t}/\alpha_{s}$. The backward process is computed from $q(\mathbf{x}_{t-1}|\mathbf{x}_{t},\mathbf{x}_{0})$ as:
\begin{equation}
q(\mathbf{x}_{s}|\mathbf{x}_{t}, \mathbf{x}_{0}) = 
\left\{
\begin{aligned}
    &Cat(\mathbf{x}_{s};\mathbf{x}_{t}), & \text{if } \mathbf{x}_{t} \neq \mathbf{m} \\
    &Cat(\mathbf{x}_{s};\frac{(1-\alpha_{s})\mathbf{m}+(\alpha_{s}-\alpha_{t})\mathbf{x}_{0}}{1-\alpha_{t}}). & \text{if } \mathbf{x}_{t} = \mathbf{m}
\end{aligned}
\right.
\label{eq:mdlm_reverse_true}
\end{equation}
MDLMs parameterize the backward process using a denoising model $\mathbf{f}_{\theta}(\mathbf{x}_{t}, t)$ to approximate $\mathbf{x}_{0}$ in Eq.~\ref{eq:mdlm_reverse_true}. In practice, they adopt an independence assumption: $\mathbf{f}_{\theta}(\mathbf{x}_{t}, t) = \prod_{l=1}^{L} \mathbf{f}_{\theta}(\mathbf{x}^{l}_{t}, t)$, which leads to $p_{\theta}(\mathbf{x}_{s}|\mathbf{x}_{t}) = \prod_{l=1}^{L} p_{\theta}(\mathbf{x}_{s}^{l}|\mathbf{x}_{t})$ where $L$ is the sequence length.

\textbf{Distribution of DLM:} Given a sequence $\mathbf{x}_{0}$, we consider $K$ intermediate time steps ($t=1,2,\ldots,K$), and the deriving decoding order is denoted as $\{Z_{1}, Z_{2}, ..., Z_{K} \}$. Each $Z_{t} = \{i|i \in \{1,2,...,L\}\}$ is an index set representing the decoding positions at step $t$. Based on the framework proposed in~\cite{lavenant2025error,chen2025optimal}, we model the distribution of DLM by two parts: a decoding planner $\mu_{\phi}$ and a distribution learner $p_{\theta}$. The decoding planner $\mu_{\phi}$ determines the decoding order, while the distribution learner $p_{\theta}$ predicts the token distribution. The overall DLM distribution across all possible decoding orders is defined as $P_{dlm}(\mathbf{x}_{0}) = \prod_{t=0}^{K} \mu_{\phi}(Z_{t}|Z_{>t}, \mathbf{x}_{t+1}) p_{\theta}(\mathbf{x}_{t}^{Z_{t}}|\mathbf{x}_{t+1})$, where $Z_{>t}$ is the union of $Z_{t+1}, Z_{t+2}, ..., Z_{K}$ and $\mathbf{x}_{t}^{Z_{t}}$ denotes the tokens of $\mathbf{x}_{t}$ positioned by $Z_{t}$. For a specific decoding order $\Psi = \{Z_{1}, Z_{2}, ..., Z_{K} \}$ the distribution learner is written as $p_{\theta}(\mathbf{x}_{0}, \Psi) = \prod_{t=0}^{K} p_{\theta}(\mathbf{x}_{t}^{Z_{t}}|\mathbf{x}_{t+1})$.
\section{Method}
\par In this section, we first conduct a theoretical analysis about the limitation of Diffusion Language Models (DLMs). We then propose the invariant energy (Inv-E) as the re-weighting function for invariant decoding. Next, we combine Inv-E with the existing independent energy (Ind-E) to obtain a unified energy (Uni-E), which can be easily parameterized and eliminates the need for partition estimation. Finally, we theoretically analyze the effectiveness of our method. The overall framework of our Uni-E is shown in Figure~\ref{fig:framework}.

\subsection{Expressive Limitation of DLMs}
\par To better understand the expressive limitations of DLMs, we analyze the distribution shift from the ground-truth distribution, defined as the Kullback–Leibler (KL) divergence between the model distribution and the ground-truth distribution:
\begin{equation}
    KL(\pi(\mathbf{x}_{0})||P_{dlm}(\mathbf{x}_{0})) = \mathbb{E}_{\pi(\mathbf{x}_{0})}[log \frac{\pi(\mathbf{x}_{0})}{P_{dlm}(\mathbf{x}_{0})}] = \mathbb{E}_{\pi(\mathbf{x}_{0}),\mu_{\phi}(\mathbf{x}_{0}, \Psi)}[log \frac{\pi(\mathbf{x}_{0})}{p_{\theta}(\mathbf{x}_{0},\Psi)}],
\label{eq:method_kl_div}
\end{equation}
where $\pi(.)$ denotes the ground-truth distribution. Given a specific order $\Psi = \{Z_{1}, Z_{2}, ..., Z_{K} \}$, the KL divergence can be expanded as:
\begin{equation}
\begin{aligned}
    & log \frac{\pi(\mathbf{x}_{0})}{p_{\theta}(\mathbf{x}_{0},\Psi)} = \sum_{t} log \frac{\pi(\mathbf{x}^{Z_{t}}_{t}|\mathbf{x}_{0}-\mathbf{x}^{Z_{t}}_{t})}{p_{\theta}(\mathbf{x}_{t}^{Z_{t}}|\mathbf{x}_{0}-\mathbf{x}^{Z_{t}}_{t})} = \sum_{t} \left[log \frac{\pi(\mathbf{x}^{Z_{t}}_{t}|\mathbf{x}_{0}-\mathbf{x}^{Z_{t}}_{t})}{\pi(\mathbf{x}^{Z_{t}}_{t}|\mathbf{x}_{t+1})} + log \frac{\pi(\mathbf{x}^{Z_{t}}_{t}|\mathbf{x}_{t+1})}{\prod_{i\in Z_{t}} \pi(\mathbf{x}^{i}_{t}|\mathbf{x}_{t+1})} \right. \\
    & \left. + log \frac{\prod_{i\in Z_{t}} \pi(\mathbf{x}^{i}_{t}|\mathbf{x}_{t+1})}{\prod_{i\in Z_{t}} p_{\theta}(\mathbf{x}^{i}_{t}|\mathbf{x}_{t+1})} + log \frac{\prod_{i\in Z_{t}} p_{\theta}(\mathbf{x}^{i}_{t}|\mathbf{x}_{t+1})}{p_{\theta}(\mathbf{x}_{t}^{Z_{t}}|\mathbf{x}_{t+1})} + log \frac{p_{\theta}(\mathbf{x}_{t}^{Z_{t}}|\mathbf{x}_{t+1})}{p_{\theta}(\mathbf{x}_{t}^{Z_{t}}|\mathbf{x}_{0}-\mathbf{x}^{Z_{t}}_{t})} \right],
\end{aligned}
\label{eq:method_kl_derive}
\end{equation}
where $\mathbf{x}_{0}-\mathbf{x}^{Z_{t}}_{t}$ denotes the relative complement of $\mathbf{x}^{Z_{t}}_{t}$ to $\mathbf{x}_{0}$. We find that the KL divergence is governed by the following three factors:

\noindent (i) \textbf{Capacity Factor}: This factor corresponds to the third term of the expanded KL divergence, which describes the model's capacity to fit the target distribution: 
\begin{equation}
    L_{lr} \coloneqq log \frac{\prod_{i\in Z_{t}} \pi(\mathbf{x}^{i}_{t}|\mathbf{x}_{t+1})}{\prod_{i\in Z_{t}} p_{\theta}(\mathbf{x}^{i}_{t}|\mathbf{x}_{t+1})} = \sum_{i \in Z_{t}} log \frac{\pi(\mathbf{x}^{i}_{t}|\mathbf{x}_{t+1})}{p_{\theta}(\mathbf{x}^{i}_{t}|\mathbf{x}_{t+1})}.
\label{qe:method_lr_limit}
\end{equation}

\begin{figure}[t!]
\vspace{-15pt}
\centering
\includegraphics[width=0.95\textwidth]{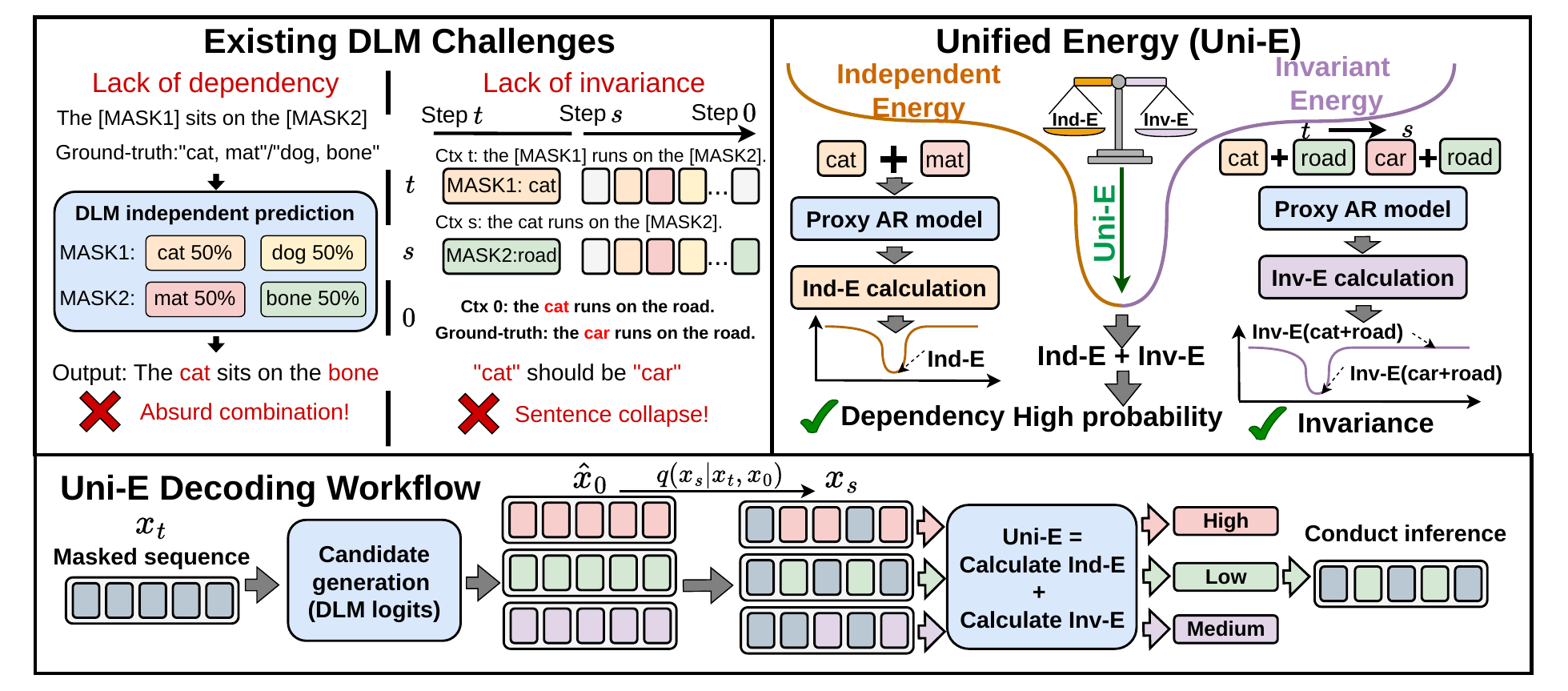}
\caption{Challenges of Diffusion Language Models and the framework of our unified energy (Uni-E). Uni-E uses a proxy model to capture both dependency and invariance to guide the decoding process.}
\label{fig:framework}
\vspace{-15pt}
\end{figure}
\noindent (ii) \textbf{Dependency Factor}: This factor corresponds to the second and fourth terms:
\begin{equation}
    L_{dp} \coloneqq log \frac{\pi(\mathbf{x}_{t}^{Z_{t}}|\mathbf{x}_{t+1})}{p_{\theta}(\mathbf{x}_{t}^{Z_{t}}|\mathbf{x}_{t+1})} + log \frac{\prod_{i\in Z_{t}} p_{\theta}(\mathbf{x}^{i}_{t}|\mathbf{x}_{t+1})}{\prod_{i\in Z_{t}} \pi(\mathbf{x}^{i}_{t}|\mathbf{x}_{t+1})} = log \frac{\pi(\mathbf{x}_{t}^{Z_{t}}|\mathbf{x}_{t+1})}{\prod_{i\in Z_{t}} \pi(\mathbf{x}^{i}_{t}|\mathbf{x}_{t+1})} + log \frac{\prod_{i\in Z_{t}} p_{\theta}(\mathbf{x}^{i}_{t}|\mathbf{x}_{t+1})}{p_{\theta}(\mathbf{x}_{t}^{Z_{t}}|\mathbf{x}_{t+1})}.
\label{eq:method_dp_limit}
\end{equation}
One way to reduce this factor is to simultaneously reduce: i) the divergence between $p_{\theta}(\mathbf{x}_{t}^{Z_{t}}|\mathbf{x}_{t+1})$ and $\pi(\mathbf{x}_{t}^{Z_{t}}|\mathbf{x}_{t+1})$; and ii) the divergence between $p_{\theta}(\mathbf{x}^{i}_{t}|\mathbf{x}_{t+1})$ and $\pi(\mathbf{x}^{i}_{t}|\mathbf{x}_{t+1})$ for all positions. However, this is almost infeasible because modeling the joint distribution $p_{\theta}(\mathbf{x}_{t}^{Z_{t}}|\mathbf{x}_{t+1})$ is difficult, and sampling from $\pi(\mathbf{x}^{Z_{t}}_{t}|\mathbf{x}_{t+1})$ is intractable. An alternative way is to identify independent tokens that can be decoded in one denoising step, such that $\pi(\mathbf{x}_{t}^{Z_{t}}|\mathbf{x}_{t+1}) = \prod_{i\in Z_{t}} \pi(\mathbf{x}^{i}_{t}|\mathbf{x}_{t+1})$, and a well-optimized $p_{\theta}$ should also satisfy $p_{\theta}(\mathbf{x}_{t}^{Z_{t}}|\mathbf{x}_{t+1}) = \prod_{i\in Z_{t}} p_{\theta}(\mathbf{x}^{i}_{t}|\mathbf{x}_{t+1})$. Therefore, this factor is totally bounded by the independency of $Z_{t}$.

\noindent (iii) \textbf{Invariance Factor}: This factor corresponds to the first and last terms:
\begin{equation}
    L_{iv} \coloneqq log \frac{\pi(\mathbf{x}_{t}^{Z_{t}}|\mathbf{x}_{0}-\mathbf{x}_{t}^{Z_{t}})}{p_{\theta}(\mathbf{x}_{t}^{Z_{t}}|\mathbf{x}_{0}-\mathbf{x}_{t}^{Z_{t}})} + log \frac{p_{\theta}(\mathbf{x}_{t}^{Z_{t}}|\mathbf{x}_{t+1})}{\pi(\mathbf{x}_{t}^{Z_{t}}|\mathbf{x}_{t+1})} = log \frac{\pi(\mathbf{x}_{t}^{Z_{t}}|\mathbf{x}_{0}-\mathbf{x}_{t}^{Z_{t}})}{\pi(\mathbf{x}_{t}^{Z_{t}}|\mathbf{x}_{t+1})} + log \frac{p_{\theta}(\mathbf{x}_{t}^{Z_{t}}|\mathbf{x}_{t+1})}{p_{\theta}(\mathbf{x}_{t}^{Z_{t}}|\mathbf{x}_{0}-\mathbf{x}_{t}^{Z_{t}})}.
\label{method:iv_limit}
\end{equation}
Due to the same reasons of sampling and joint distribution modeling, we require that the decoded tokens at each step can be fully determined. This leads to $\pi(\mathbf{x}_{t}^{Z_{t}}|\mathbf{x}_{0}-\mathbf{x}_{t}^{Z_{t}}) = \pi(\mathbf{x}_{t}^{Z_{t}}|\mathbf{x}_{t+1})$. And a well-optimized model should satisfy $p_{\theta}(\mathbf{x}_{t}^{Z_{t}}|\mathbf{x}_{0}-\mathbf{x}_{t}^{Z_{t}}) = p_{\theta}(\mathbf{x}_{t}^{Z_{t}}|\mathbf{x}_{t+1})$. This factor describes invariance in decoding: the distribution of decoded tokens (e.g. $\mathbf{x}_{t}^{Z_{t}}$) should be fully determined by the current context (e.g. $\mathbf{x}_{t+1}$) and remain unchanged in subsequent steps.

\subsection{Energy Function for Invariant Decoding}
In order to address the expressive limitations of DLMs, we first propose an effective way to guide invariant decoding for the invariance factor. Our solution is inspired by the re-weighting principle from causal inference~\cite{zhou2022model}, whose idea is that models can learn invariant representations across different feature parts (e.g., objects and background) by reweighting samples using clustering, graph partitioning, or neural network guidance. We generalize this principle to text generation in two ways: (i) we treat tokens in the sequence as basic units and consider their invariance, and (ii) we adjust the token distribution using a re-weighting function to ensure that the posterior distribution of the current decoding is close to the posterior distribution of its invariant part. Specifically, for the $i$-th token at step $t$, we assume its invariant part is $\mathcal{V}_{i}$, satisfying $\pi(\mathbf{x}^{i}_{t} | \mathbf{x}_{0} - \mathbf{x}_{0}^{i}) = \pi(\mathbf{x}_{t}^{i} | \mathbf{x}_{0}^{\mathcal{V}_{i}})$. This re-weighting function enforces $\mathcal{W}(\mathbf{x}_{t}^{i}, \mathbf{x}_{t+1}) p_{\theta}(\mathbf{x}_{t}^{i} | \mathbf{x}_{t+1}) = \pi(\mathbf{x}_{t}^{i} | \mathbf{x}_{0}^{\mathcal{V}_{i}})$. We define the current context intersection as $\mathcal{V}_{i}^{+} \coloneqq \mathcal{V}_{i} \cap Z_{>t}$ and the current context difference as $\mathcal{V}_{i}^{-} \coloneqq \mathcal{V}_{i} - Z_{>t}$. The re-weighting function can then be derived as:
\begin{equation}
\begin{aligned}
    \mathcal{W}(\mathbf{x}_{t}^{i}, \mathbf{x}_{t+1}) & = \frac{\pi(\mathbf{x}_{t}^{i}|\mathbf{x}_{0}^{\mathcal{V}_{i}})}{p_{\theta}(\mathbf{x}_{t}^{i} | \mathbf{x}_{t+1})} = \frac{\pi(\mathbf{x}_{t}^{i}|\mathbf{x}_{0}^{\mathcal{V}^{+}_{i}}, \mathbf{x}_{0}^{\mathcal{V}^{-}_{i}})}{p_{\theta}(\mathbf{x}_{t}^{i} | \mathbf{x}_{t+1})} = \frac{\pi(\mathbf{x}_{0}^{i}, \mathbf{x}_{0}^{\mathcal{V}^{-}_{i}} |\mathbf{x}_{0}^{\mathcal{V}^{+}_{i}})}{\pi(\mathbf{x}_{0}^{i}|\mathbf{x}_{0}^{\mathcal{V}^{+}_{i}})  \pi(\mathbf{x}_{0}^{\mathcal{V}^{-}_{i}}|\mathbf{x}_{0}^{\mathcal{V}^{+}_{i}})} \frac{\pi(\mathbf{x}_{0}^{i}|\mathbf{x}_{0}^{\mathcal{V}^{+}_{i}})}{p_{\theta}(\mathbf{x}_{t}^{i} | \mathbf{x}_{t+1})}. \\
\end{aligned}
\label{eq:method_reweight}
\end{equation}
Here, $\mathbf{x}^{\mathcal{V}^{+}_{i}}$ is difficult to obtain directly, so we use a proxy AR-based model $p_{AR}(.|\mathbf{x}_{t+1})$ to estimate $\pi(.|\mathbf{x}^{\mathcal{V}^{+}_{i}})$. The reasons behind are: (i) $\mathbf{x}_{t}^{i}$ remains invariant to the unmasked tokens in $\mathbf{x}_{t+1}$ except those in $\mathcal{V}^{+}_{i}$, so $\mathbf{x}_{t+1}$ can serve as a valid conditioning context; (ii) an AR model pretrained on clean data $\mathbf{x}_{0}$ is well-suited to model this distribution, and its posterior can be computed efficiently as $p_{AR}(\mathbf{x}_{0}|\mathbf{x}_{t}) = p_{AR}(\mathbf{x}_{t}, \mathbf{x}_{0} - \mathbf{x}_{t}) / p_{AR}(\mathbf{x}_{t}) =  p_{AR}(\mathbf{x}_{0}) / p_{AR}(\mathbf{x}_{t})$ where $p_{AR}(\mathbf{x}_{t}) = \sum_{Z_{t}} p_{AR}(\mathbf{x}_{0}^{Z_{t}}, \mathbf{x}_{0} - \mathbf{x}_{t})$ since the unmasked tokens are fixed and unchanged. We represent the re-weighting function as an energy function, called invariant energy (Inv-E), which is defined as: $\mathcal{W}(\textbf{x}_{t}^{i}, \textbf{x}_{t+1}) = \frac{exp(-E_{inv}(\mathbf{x}_{t}^{i}, \mathbf{x}_{t+1}))}{F(\mathbf{x}_{t})}$. Based on Eq.~\ref{eq:method_reweight} we obtain the following energy formulation:
\begin{equation}
\begin{aligned}
    E_{inv}(\mathbf{x}_{t}^{i}, \mathbf{x}_{t+1}) = log \frac{p_{\theta}(\mathbf{x}_{t}^{i}|\mathbf{x}_{t+1})}{p_{AR}(\mathbf{x}_{0}^{i}|\mathbf{x}_{t+1})} + log \frac{p_{AR}(\mathbf{x}_{0}^{\mathcal{V}^{-}_{i}}|\mathbf{x}_{t+1})}{p_{AR}(\mathbf{x}_{0}^{\mathcal{V}^{-}_{i}}|\mathbf{x}_{0}^{i}, \mathbf{x}_{t+1})} - log F(\mathbf{x}_{t}).
\end{aligned}
\label{qd:method_inv_energy_formulation}
\end{equation}
Another question is that the term $log \frac{p_{AR}(\mathbf{x}_{0}^{\mathcal{V}^{-}_{i}}|\mathbf{x}_{t+1})}{p_{AR}(\mathbf{x}_{0}^{\mathcal{V}^{-}_{i}}|\mathbf{x}_{0}^{i},\mathbf{x}_{t+1})}$ cannot be calculated directly, since the difference part $\mathcal{V}^{-}_{i}$ is unknown. Fortunately, it can be effectively estimated through Lemma~\ref{theory:lemma_1}. The detailed proof is provided in Appendix~\ref{app:lemma_proof}.

\begin{lemma}[Invariance Sampling Estimation]
The invariance divergence term can be estimated by sampling $Z_{<t}$. Given the sampled index set is $Z_{s}$, the average estimation over all possible $Z_{s}$ satisfies the following property:
\begin{equation}
\begin{aligned}
    \mathbb{E}_{Z_{s}} \left\{ log \frac{p_{AR}(\mathbf{x}_{0}^{Z_{s}}|\mathbf{x}_{t+1})}{p_{AR}(\mathbf{x}_{0}^{Z_{s}}|\mathbf{x}^{i}_{0}, \mathbf{x}_{t+1})} \right\} = \beta log \frac{\pi(\mathbf{x}_{0}^{\mathcal{V}^{-}_{i}}|\mathbf{x}_{t+1})}{\pi(\mathbf{x}_{0}^{\mathcal{V}^{-}_{i}}|\mathbf{x}^{i}_{0}, \mathbf{x}_{t+1})} + \gamma,
\end{aligned}
\end{equation}
where $\beta$ is the approximation degree falling into $\left(0,1\right]$ and $\gamma$ is the approximation error between $p_{AR}(.)$ and $\pi(.)$. If the sampling number is $r$ and the index length of $\mathcal{V}^{-}_{i}$ and $Z_{<t} - \mathcal{V}^{-}_{i}$ is $x$ and $y$ respectively, then the average $\beta$ is $\frac{r}{x+y}$.
\label{theory:lemma_1}
\end{lemma}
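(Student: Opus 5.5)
The plan is to split the sampled set $Z_{s}\subseteq Z_{<t}$ (with $|Z_{s}|=r$) into its invariant-relevant part $Z_{s}\cap\mathcal{V}^{-}_{i}$ and its irrelevant part $Z_{s}\setminus\mathcal{V}^{-}_{i}\subseteq Z_{<t}-\mathcal{V}^{-}_{i}$. Two structural facts will be used.

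\textbf{Step 1 (irrelevant part drops out).} By definition of the invariant part, $\pi(\mathbf{x}^{i}_{0}\mid \mathbf{x}_{0}-\mathbf{x}^{i}_{0})=\pi(\mathbf{x}^{i}_{0}\mid \mathbf{x}_{0}^{\mathcal{V}_{i}})$. Hence tokens outside $\mathcal{V}_{i}$ carry no information about $\mathbf{x}^{i}_{0}$ once $\mathcal{V}_{i}$ is fixed. Applying Bayes' rule gives
\[
\frac{\pi(\mathbf{x}_{0}^{A}\mid \mathbf{x}_{t+1})}{\pi(\mathbf{x}_{0}^{A}\mid \mathbf{x}_{0}^{i},\mathbf{x}_{t+1})}=\frac{\pi(\mathbf{x}_{0}^{i}\mid \mathbf{x}_{t+1})}{\pi(\mathbf{x}_{0}^{i}\mid \mathbf{x}_{0}^{A},\mathbf{x}_{t+1})}
\]
for any index set $A$. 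Consequently, the log-ratio for $Z_{s}$ depends only on $Z_{s}\cap\mathcal{V}^{-}_{i}$, and positions in $Z_{<t}-\mathcal{V}^{-}_{i}$ contribute zero.

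\textbf{Step 2 (replace the proxy by the truth).} Substituting $p_{AR}$ for $\pi$ introduces an additive discrepancy. I will define $\gamma$ as the expected difference between the $p_{AR}$ log-ratio and the $\pi$ log-ratio over the sampling of $Z_{s}$. This is a bookkeeping definition and requires no estimate.

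\textbf{Step 3 (linearize the relevant part).} The aim is to write
\[
\log\frac{\pi(\mathbf{x}_{0}^{S}\mid \mathbf{x}_{t+1})}{\pi(\mathbf{x}_{0}^{S}\mid \mathbf{x}^{i}_{0},\mathbf{x}_{t+1})}=\beta_{S}\,\log\frac{\pi(\mathbf{x}_{0}^{\mathcal{V}^{-}_{i}}\mid \mathbf{x}_{t+1})}{\pi(\mathbf{x}_{0}^{\mathcal{V}^{-}_{i}}\mid \mathbf{x}^{i}_{0},\mathbf{x}_{t+1})}
\]
for $S=Z_{s}\cap\mathcal{V}^{-}_{i}$, with $\beta_{S}\in(0,1]$. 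The natural way to do this is through a chain-rule (pointwise mutual information) decomposition over the positions of $\mathcal{V}^{-}_{i}$: the full log-ratio is a sum of per-position contributions, and $S$ captures a fraction of them. To make $\beta_{S}$ an honest fraction, I would assume the contributions are exchangeable or equal across positions, which gives $\beta_{S}=|S|/x$. Any remaining discrepancy from this assumption is absorbed into $\gamma$, and this also keeps $\beta_{S}$ in $(0,1]$.

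\textbf{Step 4 (average over samples).} Drawing $Z_{s}$ uniformly as an $r$-subset of the $x+y$ positions of $Z_{<t}$, the hypergeometric mean is $\mathbb{E}|Z_{s}\cap\mathcal{V}^{-}_{i}|=rx/(x+y)$. Therefore
\[
\mathbb{E}\,\beta_{S}=\frac{1}{x}\cdot\frac{rx}{x+y}=\frac{r}{x+y}.
\]
Combining Steps 1 through 4 by linearity of expectation yields the stated identity.

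\textbf{Main obstacle.} Step 3 is the difficult part. The pointwise mutual information between $\mathbf{x}^{i}_{0}$ and subsets of $\mathcal{V}^{-}_{i}$ is neither additive nor monotone in general, so $\beta\in(0,1]$ and the clean value $r/(x+y)$ only hold under an equal-contribution or exchangeability assumption. I would state this assumption explicitly and route any deviation into $\gamma$.
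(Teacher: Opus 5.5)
Your skeleton matches the paper's proof. The appendix tabulates, for every position of $Z_{<t}$, the token distribution before and after decoding $\mathbf{x}_{0}^{i}$: invariant positions shift $p_j\to q_j$, and non-invariant ones stay at $g_j$. The log-ratio therefore becomes a sum of per-position terms supported on $\mathcal{V}^{-}_{i}$, and a sampled $Z_{s}$ picks up exactly the terms of its invariant positions. Averaging uses only the fact that each position lies in $Z_{s}$ with probability $\frac{r}{x+y}$, and the $p_{AR}$-versus-$\pi$ discrepancy is charged to $\gamma$. Two of your steps depart from this, and you should change them.

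\textbf{Step 3: the equal-contribution hypothesis.} You do not need this hypothesis, and routing its failure into $\gamma$ is not acceptable. Once the log-ratio is additive, $\log\frac{\pi(\mathbf{x}_{0}^{A}|\mathbf{x}_{t+1})}{\pi(\mathbf{x}_{0}^{A}|\mathbf{x}_{0}^{i},\mathbf{x}_{t+1})}=\sum_{j\in A}c_j$ with $c_j=0$ outside $\mathcal{V}^{-}_{i}$. Linearity of expectation then gives $\mathbb{E}_{Z_{s}}\sum_{j\in Z_{s}}c_j=\sum_{j}c_j\Pr(j\in Z_{s})=\frac{r}{x+y}\sum_{j\in\mathcal{V}^{-}_{i}}c_j$ for arbitrary, unequal $c_j$. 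Your hypergeometric mean in Step 4 is just the special case $c_j\equiv 1$. So you never need a pointwise $\beta_S=|S|/x$, which in any case equals $0$, not a value in $(0,1]$, when $S=\emptyset$. The statement defines $\gamma$ as the proxy error between $p_{AR}$ and $\pi$ only. If $\gamma$ may also absorb arbitrary modelling error, the identity holds for any $\beta$ and says nothing.

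\textbf{Step 1: dropping the non-invariant positions.} This step does not follow from the definition as you claim. Your Bayes identity is correct. However, $\pi(\mathbf{x}_{0}^{i}|\mathbf{x}_{0}-\mathbf{x}_{0}^{i})=\pi(\mathbf{x}_{0}^{i}|\mathbf{x}_{0}^{\mathcal{V}_{i}})$ makes non-invariant tokens irrelevant to $\mathbf{x}_{0}^{i}$ only when \emph{all} of $\mathcal{V}_{i}$ is conditioned on. Consider a chain $\mathbf{x}_{0}^{i}\to\mathbf{x}_{0}^{v}\to\mathbf{x}_{0}^{j}$, so that $\mathcal{V}_{i}=\{v\}$. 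If $Z_{s}$ contains $j$ but omits $v\in\mathcal{V}^{-}_{i}$, then $\mathbf{x}_{0}^{j}$ still carries information about $\mathbf{x}_{0}^{i}$ through the unobserved $\mathbf{x}_{0}^{v}$. Hence the non-invariant positions of $Z_{s}$ do not contribute zero in general.

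The paper makes the same leap: it derives independence conditional on $\mathbf{x}_{0}^{\mathcal{V}^{-}_{i}}$ and then asserts that the $g_j$ are unchanged. Its additivity is likewise an implicit token-wise factorization. The clean repair is to state both as explicit hypotheses:
\begin{enumerate}
\item The positions of $Z_{<t}$ factorize given $\mathbf{x}_{t+1}$, both with and without $\mathbf{x}_{0}^{i}$ in the conditioning.
\item The factors at non-invariant positions are unchanged when $\mathbf{x}_{0}^{i}$ is added.
\end{enumerate}
Under exactly these assumptions, your Steps 1, 3 and 4 go through with $\beta=\frac{r}{x+y}$, and $\gamma$ is purely the proxy error.
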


For each time step $t$, we randomly select another time step $s$ satisfying $0<s<t<1$. The forward diffusion produces two noised data $\mathbf{x}_{s}$ and $\mathbf{x}_{t}$, where $\mathbf{x}_{t}$ carries all masked tokens from $\mathbf{x}_{s}$. (i.e., $\mathbf{x}_{0} \xrightarrow{mask} \mathbf{x}_{s} \xrightarrow{mask} \mathbf{x}_{t}$). We treat $\mathbf{x}_{s}$ as the "next step" of $\mathbf{x}_{t}$ and the decoding of the masked tokens in $\mathbf{x}_{s}$ is regarded as "future decoding"(i.e., invariance sampling $Z_{s}$). Based on the energy formulation in Eq.~\ref{qd:method_inv_energy_formulation} and Lemma~\ref{theory:lemma_1}, Inv-E can be estimated as:
\begin{equation}
\begin{aligned}
    E_{inv}(\mathbf{x}_{0}, \mathbf{x}_{s}, \mathbf{x}_{t}) = log \frac{p_{\theta}(\mathbf{x}_{s}|\mathbf{x}_{t})}{p_{AR}(\mathbf{x}_{s}|\mathbf{x}_{t})} + log \frac{p_{AR}(\mathbf{x}_{0}|\mathbf{x}_{t})}{p_{AR}(\mathbf{x}_{0}|\mathbf{x}_{s})} - log F_{inv} (\mathbf{x}_{t}),
\end{aligned}
\label{eq:inv_energy}
\end{equation}
where $F_{inv} (\mathbf{x}_{t})$ is the partition function. Intuitively, the first term $log \frac{p_{\theta}(\mathbf{x}_{s}|\mathbf{x}_{t})}{p_{AR}(\mathbf{x}_{s}|\mathbf{x}_{t})}$ corresponding to term $log \frac{p(\mathbf{x}_{t}^{i}|\mathbf{x}_{t+1})}{p_{AR}(\mathbf{x}_{0}^{i}|\mathbf{x}_{t+1})}$ in Eq.~\ref{qd:method_inv_energy_formulation} describes the model ability to predict the next step distribution. The second term $log \frac{p_{AR}(\mathbf{x}_{0}|\mathbf{x}_{t})}{p_{AR}(\mathbf{x}_{0}|\mathbf{x}_{s})}$ acts as the \textit{Invariance Sampling Estimation} describing the invariance, since if the decoding is invariant it will have no impact on the rest tokens (i.e., $p_{AR}(\mathbf{x}_{0}|\mathbf{x}_{t}) \approx p_{AR}(\mathbf{x}_{0}|\mathbf{x}_{s})$).


\subsection{Unified Energy for Independent and Invariant Decoding}
Based on our Inv-E, we further propose an unified energy (Uni-E) for all the factors. For the dependency factor, we adopt the independent energy (Ind-E) proposed in~\cite{xu2024energy}, which is defined as:
\begin{equation}
\begin{aligned}
    E_{ind}(\mathbf{x}_{0}, \mathbf{x}_{t}) = log \frac{p_{\theta}(\mathbf{x}_{0}|\mathbf{x}_{t})}{p_{AR}(\mathbf{x}_{0}|\mathbf{x}_{t})} - log F_{ind} (\mathbf{x}_{t}),
\end{aligned}
\label{eq:id_energy}
\end{equation}
where $F_{ind} (\mathbf{x}_{t})$ is the partition function. The idea behind Ind-E is that the independence of parallel decoded tokens can be captured by an AR model, as it operates on the entire sequence and decodes tokens sequentially. More details can be found in~\cite{xu2024energy}. We argue that Ind-E can be seamlessly integrated with Inv-E for two reasons: (i) Ind-E acts as the residual term for the DLM distribution, which has a similar form to the re-weighting function used in Inv-E; and (ii) energy functions can be naturally combined due to the principle of additivity~\cite{lecun2006tutorial}. This leads to the unified energy (Uni-E):
\begin{equation}
\begin{aligned}
    E_{uni}(\mathbf{x}_{0}, \mathbf{x}_{s}, \mathbf{x}_{t}) = E_{inv} + E_{ind} = log \frac{p_{\theta}(\mathbf{x}_{s}|\mathbf{x}_{t})}{p_{AR}(\mathbf{x}_{s}|\mathbf{x}_{t})} + log \frac{p_{\theta}(\mathbf{x}_{0}|\mathbf{x}_{t})}{p_{AR}(\mathbf{x}_{0}|\mathbf{x}_{s})} - log F (\mathbf{x}_{t}),
\end{aligned}
\label{eq:uni_energy}
\end{equation}
where $F(\mathbf{x}_{t})$ is the new partition function. This unified formulation simultaneously captures both dependency and invariance through the term $log \frac{p_{\theta}(\mathbf{x}_{0}|\mathbf{x}_{t})}{p_{AR}(\mathbf{x}_{0}|\mathbf{x}_{s})}$: the AR guidance models the dependency relationships (i.e., $p_{\theta}(\mathbf{x}_{0}|\mathbf{x}_{t}) \approx p_{AR}(\mathbf{x}_{0}|\mathbf{x}_{t})$), while the decoding from $\mathbf{x}_{s}$/$\mathbf{x}_{t}$ reflects the invariance (i.e., $p_{AR}(\mathbf{x}_{0}|\mathbf{x}_{t}) \approx p_{AR}(\mathbf{x}_{0}|\mathbf{x}_{s})$). We parameterize the unified energy-based model (Uni-EBM) as:
\begin{equation}
\begin{aligned}
    E_{\phi}(\mathbf{x}_{0}, \mathbf{x}_{s}, \mathbf{x}_{t}) = log \frac{p_{\theta}(\mathbf{x}_{s}|\mathbf{x}_{t})}{p_{AR}(\mathbf{x}_{s}|\mathbf{x}_{t})} + log \frac{p_{\theta}(\mathbf{x}_{0}|\mathbf{x}_{t})}{p_{AR}(\mathbf{x}_{0}|\mathbf{x}_{s})},
\end{aligned}
\label{eq:uni_ebm}
\end{equation}
\noindent \textbf{Partition Function Estimation}: Estimating the partition function is notoriously intractable. Traditional methods like Markov Chain Monte Carlo (MCMC) sampling or importance sampling require a large number of samples, which is not only inefficient but also inaccurate. Here, we show that \textit{the partition function of $E_{uni}$ can be calculated analytically, avoiding intractable sampling and estimation}. Based on the definition $log F(\mathbf{x}_{t}) = log \sum_{\mathbf{x}_{0}} p_{\theta}(\mathbf{x}_{0}|\mathbf{x}_{t}) exp(-E_{\phi}(\mathbf{x}_{0}, \mathbf{x}_{s}, \mathbf{x}_{t}))$, we derive it as:
\begin{equation}
\begin{aligned}
    logF(\mathbf{x}_{t}) & = log \sum_{\mathbf{x}_{0}} p_{\theta}(\mathbf{x}_{0}|\mathbf{x}_{t}) \frac{p_{AR}(\mathbf{x}_{s}|\mathbf{x}_{t})}{p_{\theta}(\mathbf{x}_{s}|\mathbf{x}_{t})}\frac{p_{AR}(\mathbf{x}_{0}|\mathbf{x}_{s})}{p_{\theta}(\mathbf{x}_{0}|\mathbf{x}_{t})} = log \left\{\sum_{\mathbf{x}_{0}} p_{AR}(\mathbf{x}_{0}|\mathbf{x}_{s})\right\} + log \frac{p_{AR}(\mathbf{x}_{s}|\mathbf{x}_{t})}{p_{\theta}(\mathbf{x}_{s}|\mathbf{x}_{t})}, \\
\end{aligned}
\label{eq:partition_calculation}
\end{equation}

where $\sum_{\mathbf{x}_{0}} p_{AR}(\mathbf{x}_{0}|\mathbf{x}_{s}) = 1$, and the partition function is $log \frac{p_{AR}(\mathbf{x}_{s}|\mathbf{x}_{t})}{p_{\theta}(\mathbf{x}_{s}|\mathbf{x}_{t})}$. This closed-form partition provides a unique advantage: Uni-EBM avoids the need for intractable estimation and enables accurate computation of the evidence lower bound (ELBO). After applying the Uni-EBM $E_{\phi}(\mathbf{x}_{0},\mathbf{x}_{s},\mathbf{x}_{t})$ to the DLM, we obtain the unified energy-based DLM (Uni-EDLM) as:
\begin{equation}
\begin{aligned}
    p_{\phi,\theta}(\textbf{x}_{0} | \textbf{x}_{t}) =  \frac{exp(-E_{\phi}(\mathbf{x}_{0}, \mathbf{x}_{s}, \mathbf{x}_{t}))}{F(\mathbf{x}_{t})} p_{\theta}(\textbf{x}_{0} | \textbf{x}_{t}).
\end{aligned}
\label{eq:uni_edlm}
\end{equation}
\noindent \textbf{Uni-EDLM Parameters}: Similar to the independent energy-based diffusion language model~\cite{xu2024energy}, there are two ways to obtain the model parameters of Uni-EDLM. The first is to utilize a pretrained AR model and a DLM to directly calculate $E_{\phi}(\mathbf{x}_{0},\mathbf{x}_{s},\mathbf{x}_{t})$, and we refer to the resulting model as Uni-EDLM-AR. The second one is to fine-tune the pretrained parameters using the Noise Contrastive Estimation (NCE) loss~\cite{gutmann2010noise}. In this setting, we freeze the pretrained AR model and fine-tune the parameters of the DLM. We use the clean data $\mathbf{x}_{0}$ together with $\mathbf{x}_{s}$, $\mathbf{x}_{t}$ as positive samples, and we sample $\hat{\mathbf{x}}_{0}$ from $p_{\theta}(\textbf{x}_{0} | \textbf{x}_{t})$ and derive $\hat{\mathbf{x}}_{s}$ as negative samples. We refer to the fine-tuned model as Uni-EDLM-NCE. The loss is described as follows, and the training algorithm is in Appendix~\ref{app:training_alg}.
\begin{equation}
\begin{aligned}
    \mathcal{L}_{NCE}(\phi,\theta) & = \mathbb{E}_{\mathbf{x}_{0},\mathbf{x}_{t} \sim \pi, q} \{\mathbb{E}_{\mathbf{x}_{s} \sim q(.|\mathbf{x}_{t},\mathbf{x}_{0})} log\frac{1}{1+exp(E_{\phi}(\mathbf{x}_{0}, \mathbf{x}_{s}, \mathbf{x}_{t}))} \\ 
    & + \mathbb{E}_{\hat{\mathbf{x}}_{0}, \hat{\mathbf{x}}_{s} \sim p_{\theta}(.|\mathbf{x}_{t}), q(.|\mathbf{x}_{t},\hat{\mathbf{x}}_{0})} log\frac{1}{1+exp(-E_{\phi}(\hat{\mathbf{x}}_{0}, \hat{\mathbf{x}}_{s}, \mathbf{x}_{t}))}\}.
\end{aligned}
\label{eq:nce_loss}
\end{equation}
\noindent \textbf{Inference of Uni-EDLM}: We adapt the self-normalized importance sampling strategy~\citep{hammersley2013monte,james1980monte,xu2024energy} to Uni-EDLM, whose framework is shown at the bottom of Figure~\ref{fig:framework}. At each time step $t$, we randomly select time step $s<t$. The importance sampling involves: (i) sampling multiple $\mathbf{x}_{0}$ candidates $\{\mathbf{x}_{0}^{j}\}_{j=1}^{k}$ from the DLM in parallel; (ii) sampling $\mathbf{x}_{s}^{j}$ from $\mathbf{x}^{j}_{0}$ and $\mathbf{x}_{t}$ in parallel via the backward process in Eq.~\ref{eq:mdlm_reverse_true}; (iii) feeding these tuples $\{(\mathbf{x}_{0}^{j},\mathbf{x}_{s}^{j}, \mathbf{x}_{t})\}_{j=1}^{k}$ into the energy function in batch to calculate their energies; and (iv) selecting the $\mathbf{x}^{j}_{0}$ with the lowest energy. The selected $\mathbf{x}^{j}_{0}$ is then fed into $q(\mathbf{x}_{t-1}|\mathbf{x}_{t}, \mathbf{x}_{0})$ to conduct one-step inference. In practice, we also define a sampling window with size $w$ ($w=0.2$ by default), and apply importance sampling only within the interval $(1-w, 1]$. The full inference algorithm is provided in Appendix~\ref{app:sampling_alg}.
 
\noindent \textbf{Effectiveness Discussion}: We analyze the effectiveness of Uni-EDLM with respect to the three limitation factors. First we ask: \textit{Can Uni-EDLM reduce the model distribution shift in non-independent or non-invariant decoding scenarios?} Our answer is \textbf{yes}. The effectiveness of Uni-EDLM on dependency and invariance factors is formalized in Lemma~\ref{theory:lemma_2}, with the proof provided in Appendix~\ref{app:lemma_2_proof}. Besides, Uni-EBM can be integrated with any scale models to obtain Uni-EDLMs with various size. This addresses the limitation related to the capacity factor.
\begin{lemma}[Effectiveness of Uni-EDLM]
Across all the possible decodings, the Kullback-Leibler divergence between Uni-EDLM and the ground-truth distribution is strictly lower than that between DLM and the ground-truth distribution in non-independent or non-invariant scenarios.
\label{theory:lemma_2}
\end{lemma}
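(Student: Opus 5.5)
We compare the two divergences one decoding order at a time, using the decomposition in Eq.~\ref{eq:method_kl_derive}. Fix an order $\Psi=\{Z_1,\dots,Z_K\}$ and a step $t$. Both models share the same planner $\mu_\phi$ and the same capacity term $L_{lr}$. Hence the difference $KL(\pi\|P_{dlm})-KL(\pi\|P_{\text{Uni-EDLM}})$ reduces to a sum over $t$ of differences in the dependency term $L_{dp}$ and the invariance term $L_{iv}$.

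The plan is to write the Uni-EDLM conditional, Eq.~\ref{eq:uni_edlm}, with the closed-form partition function from Eq.~\ref{eq:partition_calculation}. Substituting $F(\mathbf{x}_t)=p_{AR}(\mathbf{x}_s|\mathbf{x}_t)/p_\theta(\mathbf{x}_s|\mathbf{x}_t)$ into $\exp(-E_\phi)$ should collapse $p_{\phi,\theta}(\mathbf{x}_0|\mathbf{x}_t)$ to $p_{AR}(\mathbf{x}_0|\mathbf{x}_s)$ reweighted through the $\mathbf{x}_s$ path. In other words, each step of Uni-EDLM is the DLM conditional multiplied by the product $\exp(-E_{ind})\exp(-E_{inv})$.

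\textbf{Step 1 (dependency term).} The factor $\exp(-E_{ind})$ replaces the factorized product $\prod_{i\in Z_t}p_\theta(\mathbf{x}_t^i|\mathbf{x}_{t+1})$ by the joint $p_{AR}(\mathbf{x}_t^{Z_t}|\mathbf{x}_{t+1})$, following the argument of \cite{xu2024energy}. Take expectations under $\pi$, treating $\gamma$ in Lemma~\ref{theory:lemma_1} as the AR approximation error, assumed negligible relative to the DLM factorization error. The expected $L_{dp}$ for Uni-EDLM is then a KL between the true joint and the AR joint. For the DLM it is the true joint against a product of marginals, i.e.\ the mutual-information-type quantity $\mathbb{E}_\pi\log\frac{\pi(\mathbf{x}_t^{Z_t}|\mathbf{x}_{t+1})}{\prod_i\pi(\mathbf{x}_t^i|\mathbf{x}_{t+1})}$. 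That quantity is nonnegative and vanishes iff the tokens in $Z_t$ are conditionally independent. It is therefore strictly positive in a non-independent step.

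\textbf{Step 2 (invariance term).} Here I apply Lemma~\ref{theory:lemma_1}. The sampled correction $\log\frac{p_{AR}(\mathbf{x}_0|\mathbf{x}_t)}{p_{AR}(\mathbf{x}_0|\mathbf{x}_s)}$ equals, in expectation over $Z_s$, $\beta$ times the true invariance log-ratio plus $\gamma$, with $\beta\in(0,1]$. Substituting into $L_{iv}$ should leave a residual $(1-\beta)$ times the original invariance divergence. The DLM's expected invariance term is $\mathbb{E}_\pi\log\frac{\pi(\mathbf{x}_t^{Z_t}|\mathbf{x}_0-\mathbf{x}_t^{Z_t})}{\pi(\mathbf{x}_t^{Z_t}|\mathbf{x}_{t+1})}$, a conditional mutual information between the decoded tokens and the still-masked context $\mathcal{V}^-$. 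It is nonnegative and positive exactly when invariance fails. Since $\beta>0$, the Uni-EDLM residual is strictly smaller.

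\textbf{Step 3 (combine).} Sum over $t$, then average over $\Psi$ with weights $\mu_\phi$. Every term of the difference is then $\ge 0$, and at least one is strictly positive whenever some step with positive probability is non-independent or non-invariant.

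The main obstacle is Step~2. The pointwise log-ratios are not signed, and Lemma~\ref{theory:lemma_1} holds only in expectation over $Z_s$. The strictness argument therefore has to be made after taking $\mathbb{E}_\pi$ and $\mathbb{E}_{Z_s}$, where the terms become (conditional) mutual informations. I would first pass to those expectations, identify each term as a KL, and then use $\beta>0$ for strictness. The $\gamma$ terms would be handled by assuming $p_{AR}$ approximates $\pi$ at least as well as the factorized $p_\theta$, stated explicitly as a hypothesis.
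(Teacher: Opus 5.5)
Your route differs from the paper's. The appendix proof does not use Eq.~\ref{eq:method_kl_derive} or Lemma~\ref{theory:lemma_1}. It works on the transition KL weighted by a ground-truth planner $\Omega(\mathbf{x}_s|\mathbf{x}_t)$, assumes $p_\theta$ is flat across candidate decodings and that $E_{uni}$ gives lower energy to exactly the decodings $\Omega$ ranks higher, and concludes from co-monotonicity (that last step itself needs the reweighting not to overshoot $\Omega$).

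Your term-by-term version fails at the reduction step, because Uni-EDLM does not share $L_{lr}$ with the DLM. The five terms of Eq.~\ref{eq:method_kl_derive} telescope, so changing the model changes every model-dependent term at once. With the closed-form partition of Eq.~\ref{eq:partition_calculation}, $\exp(-E_\phi(\mathbf{x}_0,\mathbf{x}_s,\mathbf{x}_t))\,p_\theta(\mathbf{x}_0|\mathbf{x}_t)/F(\mathbf{x}_t)=p_{AR}(\mathbf{x}_0|\mathbf{x}_s)$ exactly, with no residual reweighting. So $p_\theta$ cancels, and the marginals in the Uni-EDLM capacity term come from the proxy.

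This also breaks your Step~1 formula. Write $\pi_J=\pi(\mathbf{x}_t^{Z_t}|\mathbf{x}_{t+1})$, $\pi^i=\pi(\mathbf{x}_t^i|\mathbf{x}_{t+1})$ and $p_\theta^i=p_\theta(\mathbf{x}_t^i|\mathbf{x}_{t+1})$, with $p_J,p^i$ the Uni-EDLM counterparts. For Uni-EDLM, $\mathbb{E}_\pi[L_{dp}]=\mathrm{KL}(\pi_J\|p_J)-\sum_{i\in Z_t}\mathrm{KL}(\pi^i\|p^i)$, which is not sign-definite. What is controlled is the sum: $\mathbb{E}_\pi[L_{lr}+L_{dp}]=\mathrm{KL}(\pi_J\|p_J)$ for any model. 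For the factorized DLM this equals $\sum_{i\in Z_t}\mathrm{KL}(\pi^i\|p_\theta^i)+\mathrm{TC}_\pi$, where $\mathrm{TC}_\pi$ is your total-correlation term. Strictness in the non-independent case therefore needs $\mathrm{TC}_\pi>0$ \emph{and} the explicit hypothesis $\mathrm{KL}(\pi_J\|p_J)\le\sum_i\mathrm{KL}(\pi^i\|p_\theta^i)$. That hypothesis, not the energy decomposition, carries the result, and your ``$\gamma$ negligible'' assumption must be stated in that form.

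The invariance half does not go through on your route. We have $p_\theta(\mathbf{x}_0,\Psi)=\prod_t p_\theta(\mathbf{x}_t^{Z_t}|\mathbf{x}_{t+1})$, and the chain rule along $\Psi$ gives $\pi(\mathbf{x}_0)=\prod_t\pi(\mathbf{x}_t^{Z_t}|\mathbf{x}_{t+1})$. Hence $\log\frac{\pi(\mathbf{x}_0)}{p(\mathbf{x}_0,\Psi)}=\sum_t(L_{lr}+L_{dp})$ exactly, for any model that factorizes along $\Psi$. The $L_{iv}$ terms are precisely the discrepancy introduced by the full-context conditionals in the first equality of Eq.~\ref{eq:method_kl_derive}. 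At a fixed order with a shared planner, which is how your Steps~1--3 are organized, they drop out of the comparison, so Step~2 lowers a quantity that is not in the KL you compare.

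Step~2 is also flawed on its own terms, in two ways.
\begin{itemize}
\item The DLM's expected $L_{iv}$ is not the conditional mutual information. You dropped the model half $\log\frac{p_\theta(\mathbf{x}_t^{Z_t}|\mathbf{x}_{t+1})}{p_\theta(\mathbf{x}_t^{Z_t}|\mathbf{x}_0-\mathbf{x}_t^{Z_t})}$. It compares the network under two maskings, does not vanish, and roughly cancels the CMI when $p_\theta$ is accurate.
\item The gain you take from Lemma~\ref{theory:lemma_1} comes from conditioning the step-$t$ factor on true future tokens $\mathbf{x}_0^{Z_s}$. The resulting product is not a normalized model along $\Psi$, and the actual sampler draws those tokens from $p_\theta$ anyway.
\end{itemize}
Any benefit from non-invariance would have to come through which positions get decoded. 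That is what the paper's $\Omega$-based argument targets, at the price of assuming the energy ranks decodings correctly.
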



\section{Experiments}

\subsection{Experiments on Diffusion Language Models (DLMs)}
\label{subsec:dlm_exp}
\noindent \textbf{Settings}: Following prior works~\cite{xu2024energy,sahoo2025diffusion,sahoo2024simple,arriola2025block}, we use OpenWebText (OWT)~\cite{Gokaslan2019OpenWeb} for fine-tuning and evaluation, along with seven zero-shot datasets: PTB~\cite{marcus1993building}, Wikitext~\cite{merity2016pointer}, LM1B~\cite{chelba2013one}, Lambada~\cite{paperno2016lambada}, AG News~\cite{zhang2015character}, Pubmed and Arxiv~\cite{cohan2018discourse}. The baselines include: (i) Transformer (AR)~\cite{vaswani2017attention}; (ii) traditional DLMs: MDLM~\cite{sahoo2024simple}, SEDD~\cite{lou2023discrete}, Duo~\cite{sahoo2025diffusion} and Block Diffusion~\cite{arriola2025block}; (iii) dependency-aware DLMs: EDLM-(AR and NCE)~\cite{xu2024energy} and DCD~\cite{liu2024discrete}; and (iv) remask: ReMDM~\cite{wang2025remasking}. Perplexity (PPL) and Generative Perplexity (Gen PPL) are used to measure effectiveness and we follow~\cite{xu2024energy} to use average wall-clock time to evaluate efficiency. More details are provided in Appendix~\ref{app:dlm_settings}.

\noindent \textbf{Implementation Details}: For the AR models, we follow common practice and use GPT-2-small~\cite{radford2019language,sahoo2024simple,xu2024energy} for our method and baselines. We use the pretrained MDLM~\cite{sahoo2024simple} as the diffusion backbone. These pretrained models are used directly for Uni-EDLM-AR. For Uni-EDLM-NCE, we freeze the AR model and fine-tune the MDLM parameters using 4xH200 GPUs. The maximum fine-tuning step is 1,000,000, with a learning rate of 3e-4. The per-GPU batch size is 16. We save checkpoint every 10,000 steps and select the best model based on the dev set (randomly select 0.1\% data). We set the candidate size $k$ as 2 for importance sampling and set the window size $w$ as 0.2 by default.

\begin{table}[h!]
\centering
\small
\caption{Perplexity of all the methods. The Lower is better. $^{*}$ denotes reproduced results, $^{\dagger}$ taken from~\cite{xu2024energy}, $^{\ddagger}$ taken from~\cite{sahoo2025diffusion}. The best result is in \textbf{bold} and the second best is underlined (except AR).}
\begin{tabular}{l|c|ccccccc}
\toprule
Methods & OWT & PTB & WikiText & LM1B & Lambada & AG News & Pubmed & Arxiv \\
\midrule
AR$^{\dagger}$ & 17.56 & 82.05 & 25.75 & 51.25 & 51.28 & 52.09 & 49.01 & 41.73 \\
\midrule
SEDD$^{\dagger}$ & 24.56 & 100.09 & 34.28 & 68.20 & 49.86 & 62.09 & 44.53 & 38.48 \\
MDLM$^{\dagger}$ & 23.83 & 95.26 & 32.83 & 67.01 & 47.52 & 61.15 & 41.89 & 37.37 \\
Duo$^{\ddagger}$ & 23.25 & 89.35 & 33.57 & 73.86 & 49.78 & 67.81 & 44.48 & 40.39 \\
Block Diffusion$^{*}$ & 21.27 & 97.23 & 31.82 & 61.39 & 49.50 & 62.16 & 42.18 & 38.74 \\
\midrule
DCD$^{*}$ & 23.33 & 95.69 & 35.20 & 66.51 & \underline{46.71} & 61.35 & 47.65 & 40.09 \\
EDLM-NCE$^{\dagger}$ & 21.52 & 93.21 & 30.77 & 63.19 & 46.92 & 60.02 & \underline{41.80} & \underline{36.63} \\
EDLM-CoAR$^{\dagger}$ & 17.58 & 89.73 & 28.31 & \underline{60.23} & 50.04 & \textbf{57.94} & 46.31 & 39.02 \\
\midrule
Uni-EDLM-AR & \underline{17.41} & \underline{87.36} & \underline{26.62} & 60.51 & \textbf{46.62} & \underline{58.40} & \textbf{41.35} & 37.37 \\
Uni-EDLM-NCE & \textbf{16.54} & \textbf{86.18} & \textbf{24.01} & \textbf{58.14} & 48.26 & 58.66 & 42.74 & \textbf{35.53} \\
\bottomrule
\end{tabular}
\label{exp:tab_ppl_comparison}
\end{table}

\begin{table*}[h!]
\centering
\tiny
\setlength{\tabcolsep}{4pt}
\caption{Generative Perplexity under Llama2, Llama3, and GPT2. $\downarrow$ means the lower is better. The best results are highlighted with \textbf{bold}. $^{*}$ denotes reproduced results, $^{\dagger}$ taken from~\cite{xu2024energy}.}
\begin{tabular}{l|c|ccc|c}
\toprule
Method & Timesteps & Llama2$\downarrow$ & Llama3$\downarrow$ & GPT2$\downarrow$ & Entropy \\
\midrule
Data & - & 7.0 & 9.4 & 14.7 & 7.7 \\
AR$^{\dagger}$ & 1024 & 22.9 & 40.3 & 35.7 & 8.1 \\
\midrule
SEDD$^{\dagger}$ & 128 / 256 / 512 / 1024 
& 46.3 / 36.1 / 32.5 / 27.3
& 86.6 / 65.0 / 54.3 / 43.7
& 78.5 / 64.8 / 52.2 / 41.5
& 8.1 / 7.8 / 7.7 / 7.6 \\

MDLM$^{\dagger}$ & 128 / 256 / 512 / 1024 
& 48.5 / 37.2 / 30.6 / 27.6
& 86.7 / 66.9 / 52.6 / 44.6
& 78.8 / 66.8 / 52.4 / 42.6
& 8.1 / 7.9 / 7.8 / 7.6 \\

Duo$^{*}$ & 128 / 256 / 512 / 1024 
& 45.8 / 35.6 / 29.5 / 25.4
& 85.3 / 65.4 / 51.9 / 43.7
& 74.1 / 63.2 / 48.6 / 36.6
& 7.9 / 7.7 / 7.6 / 7.6 \\

Block Diffusion$^{*}$ & 128 / 256 / 512 / 1024 
& 47.1 / 37.9 / 31.1 / 26.7
& 89.8 / 69.6 / 54.1 / 45.0
& 83.9 / 66.7 / 51.8 / 40.8
& 7.7 / 7.5 / 7.3 / 7.1 \\

\midrule
DCD$^{*}$ & 128 / 256 / 512 / 1024 
& 43.8 / 36.6 / 28.5 / 22.2
& 82.4 / 64.1 / 48.4 / 40.9
& 74.3 / 63.5 / 47.3 / 38.7
& 7.9 / 7.6 / 7.6 / 7.5 \\

ReMDM$^{*}$ & 128 / 256 / 512 / 1024 
& 47.2 / 35.4 / 29.6 / 21.8
& 86.4 / 64.0 / 42.8 / 33.0
& 76.4 / 64.1 / 44.6 / 26.9
& 8.1 / 7.7 / 7.7 / 7.5 \\

EDLM-AR$^{\dagger}$ & 128 / 256 / 512 / 1024 
& 43.2 / 34.7 / 26.8 / 19.0
& 83.2 / 62.2 / 44.4 / 28.8
& 71.3 / 62.1 / 42.0 / 25.5
& 8.0 / 7.9 / 7.6 / 7.2 \\

EDLM-NCE$^{\dagger}$ & 128 / 256 / 512 / 1024 
& 43.2 / 35.7 / 26.3 / 19.6
& 83.5 / 62.9 / 44.1 / 28.8
& 71.7 / 61.7 / 42.5 / 25.5 
& 8.0 / 7.9 / 7.6 / 7.3 \\
\midrule
Uni-EDLM-AR & 128 / 256 / 512 / 1024 
& \textbf{40.4} / 32.4 / 23.2 / 13.4
& \textbf{79.9} / \textbf{56.5} / 37.6 / 19.2
& \textbf{68.2} / 55.5 / 37.6 / 19.2
& 8.1 / 7.9 / 7.5 / 7.2 \\

Uni-EDLM-NCE & 128 / 256 / 512 / 1024 
& 40.6 / \textbf{32.2} / \textbf{22.8} / \textbf{13.1}
& 80.0 / 57.3 / \textbf{37.5} / \textbf{18.9}
& 69.4 / \textbf{54.3} / \textbf{37.5} / \textbf{18.9}
& 8.1 / 7.9 / 7.5 / 7.2 \\
\bottomrule

\end{tabular}
\label{exp:tab_gen_ppl}
\end{table*}

\textbf{Perplexity (PPL)}: The PPL results are shown in Table~\ref{exp:tab_ppl_comparison}. Uni-EDLM significantly outperforms traditional DLMs. In particular, compared to the backbone MDLM, it achieves a 26.9\% improvement on OWT without fine-tuning. After fine-tuning, Uni-EDLM even achieves lower perplexity than the AR model. In addition, Uni-EDLM performs better than other energy baselines under both pre-training and fine-tuning settings. These results demonstrate the effectiveness of Uni-EDLM.

\textbf{Generative Perplexity (Gen PPL)}: The Gen PPL under different oracle Large Language Models (LLMs), along with entropy, is reported in Table~\ref{exp:tab_gen_ppl}. In this evaluation we set energy-based sampling window size $w$ as 1 to show the full power of Uni-E. Uni-EDLM consistently outperforms the baselines across all timesteps. Moreover, Uni-EDLM achieves comparable (22.8 vs 22.9 on Llama2) or better (37.5 vs 40.3 on Llama3) performance than the AR model while using only half of the timesteps (512). This indicates that Uni-E improves generation quality and reduces the number of required timesteps. Furthermore, the improvement over EDLM increases as the number of timesteps grows. A possible reason is that Inv-E reduces non-invariant error accumulation, which benefits more for longer timestep. The remask strategy can only perform well in large timesteps as it needs more steps to fix "error". As for entropy, Uni-EDLM maintains a similar level of diversity as others.

\begin{wrapfigure}{r}{0.38\textwidth} 
  \vspace{-15pt}
  \includegraphics[width=0.37\textwidth]{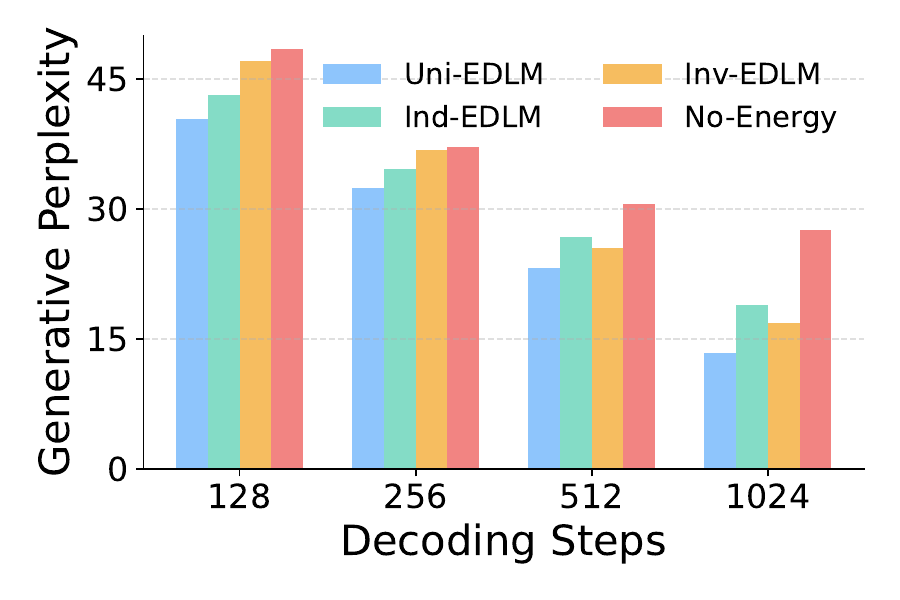} 
  \caption{Ablation studies. Inv-EDLM only uses invariant energy. Ind-EDLM only uses independent energy.}
  \label{fig:ablation}
\end{wrapfigure}
\textbf{Ablation Studies}: We use Uni-EDLM-AR as an example to conduct an ablation study to examine the influence of different energy terms. To show the full effect, we also set the window size $w$ as 1. The Gen PPL via Llama2 is shown in Figure~\ref{fig:ablation}. No energy has the same performance as the diffusion backbone MDLM. Using only the invariant energy (Inv-EDLM) provides limited benefits at small timesteps but yields significant improvements at larger timesteps. This observation suggests that invariant decoding is more beneficial at larger timesteps. In contrast, at smaller timesteps, independence becomes more important, as more tokens are decoded at each step. Using only the independent energy (Ind-EDLM) shows a larger gap compared to Uni-EDLM at larger timesteps, due to the lack of invariance modeling.
\begin{figure}[h!]
\centering
\includegraphics[width=0.95\textwidth]{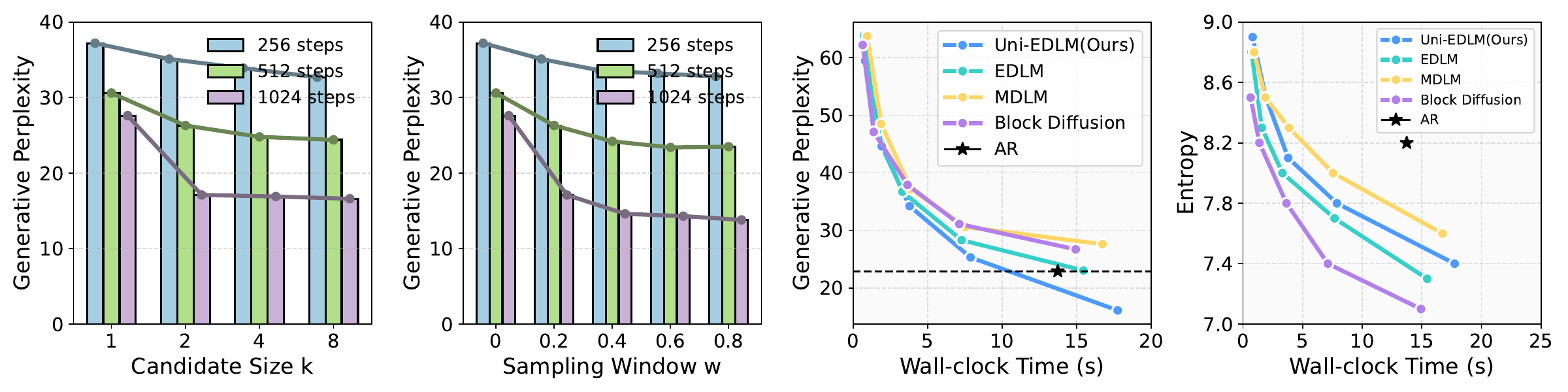}
\caption{Hyper-parameter and efficiency analysis. From left to right, the first two figures show the influence of candidate size $k$ and sampling window $w$ respectively; the last two figures show the trade-off between Gen PPL/entropy and wall-clock time. Llama2 is used as the oracle model.}
\label{exp:fig_quality_time_tradeoff}
\vspace{-12pt}
\end{figure}

\textbf{Analysis}: We conduct a hyperparameter study and efficiency analysis, with results shown in Figure~\ref{exp:fig_quality_time_tradeoff}. We use Uni-EDLM-AR as the example, where $k$ is set as 2 and $w$ is set as 0.2 by default. For the candidate size $k$, we find that extra candidate sampling mainly benefits small timesteps, with limited gains at larger timesteps. This is because when the number of timesteps is small, additional candidate sampling can improve decoding independency. For the sampling window $w$, we observe that applying energy function in the first 40\% steps can significantly improves generation quality, while the gain decreases substantially in the remaining 60\% steps (less than 3 points). This is consistent with the observation in~\cite{xu2024energy} that most errors occur in the early decoding steps. Based on this, we argue that applying Uni-E only in the first 20\% of steps with a candidate size of 2 is sufficient. As shown in the trade-off between Gen PPL and average wall-clock time, Uni-EDLM requires less time than the AR model to achieve the same performance, providing about a 22\% speedup (10.5s vs. 13.5s). In terms of entropy, all DLMs demonstrate better generative diversity than the AR model.

\textbf{Case Study}: Case study is given in Appendix~\ref{app:case_study} to show the effects of invariance and independency.

\subsection{Experiments on Diffusion Large Language Models (DLLMs)}
\label{subsec:dllm_exp}
Uni-E is model agnostic and can be easily integrated into DLLMs: for $\mathbf{x}_{t}$ we generates $k$ clean candidates $\{\mathbf{x}_{0}^{j}\}_{j=1}^{k}$ from the DLLM logits. To better align with the decoding trajectory of DLLMs, we use the highest logit of tokens as the weight to select $m$ unmasked positions for each $\mathbf{x}_{0}^{j}$, masking the remaining positions in $\mathbf{x}_{0}^{j}$ to generate $\mathbf{x}_{s}^{j}$. We calculate energy via Eq.~\ref{eq:uni_ebm} and select the $\mathbf{x}_{s}^{j}$ with lowest energy as the decoding result. The algorithm of this process are provided in Appendix~\ref{app:sampling_alg_dllm}. This integration is similar to speculative decoding~\cite{israel2025accelerating,gao2025self}, but the key difference is that \textit{existing speculative decodings are heuristic and ignore the decoding invariance.} We provide a detailed discussion in Appendix~\ref{app:uni_e_comparison_sec}.

\textbf{Settings:} Following~\cite{israel2025accelerating}, we use the pretrained Qwen2.5-0.5B~\cite{yang2025qwen3} as the proxy AR model, and LLaDa-8B-Instruct~\cite{nie2025large} and Dream-7B-Instruct~\cite{ye2025dream} as the DLLM backbones. We use these models to compute Uni-E directly. We set $k=3$ and $m=4$ for energy-based decoding, and apply energy decoding only in the first 20\% of decoding steps. We evaluate on the following benchmark datasets: MATH500~\cite{hendrycks2021measuring}, GSM8K~\cite{cobbe2021training}, MBPP~\cite{austin2021program}, LiveCodeBench V2(LCB V2)~\cite{jain2024livecodebench}, LiveBench~\cite{white2024livebench}, MMLU~\cite{hendrycks2020measuring} and HellaSwag~\cite{zellers2019hellaswag}. We compare against (i) Qwen2.5-0.5B, Qwen2.5-7B~\cite{yang2025qwen3}, and Llama3.1-8B~\cite{grattafiori2024llama}; (ii) Speculative decoding APD~\cite{israel2025accelerating}; (iii) Dependency decoding DAWN~\cite{luo2026dawn}; and (iv) Remask CORE~\cite{zhai2026core}. The detailed settings are provided in Appendix~\ref{app:dllm_exp_settings}.

\begin{table*}[h!]
\vspace{-3pt}
\centering
\footnotesize
\caption{Benchmark results. $^{\dagger}$ taken from~\cite{wang2025revolutionizing}. $^{\ddagger}$ taken from ~\cite{yang2025qwen3}. $^{*}$ means reproduced results.}
\vspace{-3pt}
\setlength{\tabcolsep}{4pt}
\begin{tabular}{l|cc|ccc|cc}
\toprule
Model 
& MATH500 & GSM8K 
& MBPP & LCB V2 & LiveBench 
& MMLU & HellaSwag \\
\midrule


Qwen2.5-7B$^{\dagger}$ 
& 74.0 & 89.9 & 74.9 & 26.9 & 31.1 & 74.2 & 80.2 \\

Qwen2.5-0.5B$^{\ddagger}$
& 33.8 & 41.6 & 39.3 & 10.4 & 12.1 & 47.5 & 52.1 \\

\midrule
LLaDa-8B$^{*}$ 
& 35.2 & 78.1 & 34.4 & 5.8 & 4.1 & 57.3 & 68.2 \\

+DAWN$^{*}$
& 38.3 (\textcolor{darkgreen}{+3.1}) 
& 79.0 (\textcolor{darkgreen}{+0.9}) 
& 33.6 (\textcolor{darkred}{-0.8}) 
& 5.5 (\textcolor{darkred}{-0.3}) 
& 4.8 (\textcolor{darkgreen}{+0.7}) 
& 58.6 (\textcolor{darkgreen}{+1.3}) 
& 68.8 (\textcolor{darkgreen}{+0.6}) \\

+APD$^{*}$
& 36.8 (\textcolor{darkgreen}{+1.6}) 
& 77.5 (\textcolor{darkred}{-0.6}) 
& 32.6 (\textcolor{darkred}{-1.8}) 
& 7.1 (\textcolor{darkgreen}{+1.3}) 
& 6.3 (\textcolor{darkgreen}{+2.2}) 
& 60.5 (\textcolor{darkgreen}{+3.2}) 
& 69.6 (\textcolor{darkgreen}{+1.4}) \\

+CORE$^{*}$
& 35.6 (\textcolor{darkgreen}{+0.4}) 
& 78.7 (\textcolor{darkgreen}{+0.6}) 
& 33.9 (\textcolor{darkred}{-0.5}) 
& 4.9 (\textcolor{darkred}{-0.9}) 
& 3.0 (\textcolor{darkred}{-1.1}) 
& 56.5 (\textcolor{darkred}{-0.8}) 
& 68.0 (\textcolor{darkred}{-0.2}) \\

\rowcolor{gray!15}
+Ind-Energy
& 37.6 (\textcolor{darkgreen}{+2.4}) 
& 79.5 (\textcolor{darkgreen}{+1.4}) 
& 34.9 (\textcolor{darkgreen}{+0.5}) 
& 9.2 (\textcolor{darkgreen}{+3.4}) 
& 8.6 (\textcolor{darkgreen}{+4.5}) 
& 62.7 (\textcolor{darkgreen}{+5.4}) 
& 69.9 (\textcolor{darkgreen}{+1.7}) \\

\rowcolor{gray!15}
+Inv-Energy
& 36.5 (\textcolor{darkgreen}{+1.3}) 
& 78.7 (\textcolor{darkgreen}{+0.6}) 
& 35.0 (\textcolor{darkgreen}{+0.6}) 
& 7.8 (\textcolor{darkgreen}{+2.0}) 
& 5.7 (\textcolor{darkgreen}{+1.6}) 
& 59.4 (\textcolor{darkgreen}{+2.1}) 
& 69.0 (\textcolor{darkgreen}{+0.8}) \\

\rowcolor{gray!15}
+Uni-Energy
& 39.2 (\textcolor{darkgreen}{+4.0}) 
& 80.1 (\textcolor{darkgreen}{+2.0}) 
& 35.2 (\textcolor{darkgreen}{+0.8}) 
& 10.5 (\textcolor{darkgreen}{+4.7}) 
& 10.9 (\textcolor{darkgreen}{+6.8}) 
& 66.5 (\textcolor{darkgreen}{+9.2}) 
& 71.2 (\textcolor{darkgreen}{+3.0}) \\

\midrule

Dream-7B$^{*}$ 
& 34.4 & 69.7 & 55.4 & 7.5 & 7.1 & 61.8 & 70.6 \\


+APD$^{*}$
& 35.9 (\textcolor{darkgreen}{+1.5}) 
& 71.4 (\textcolor{darkgreen}{+1.7}) 
& 55.8 (\textcolor{darkgreen}{+0.4}) 
& 9.0 (\textcolor{darkgreen}{+1.5}) 
& 8.2 (\textcolor{darkgreen}{+1.1}) 
& 61.3 (\textcolor{darkred}{-0.5}) 
& 71.3 (\textcolor{darkgreen}{+0.7}) \\

+CORE$^{*}$
& 34.6 (\textcolor{darkgreen}{+0.2}) 
& 70.1 (\textcolor{darkgreen}{+0.4}) 
& 56.0 (\textcolor{darkgreen}{+0.6}) 
& 6.3 (\textcolor{darkred}{-1.2}) 
& 6.5 (\textcolor{darkred}{-0.6}) 
& 62.2 (\textcolor{darkgreen}{+0.4}) 
& 69.8 (\textcolor{darkred}{-0.8}) \\


\rowcolor{gray!15}
+Ind-Energy
& 35.7 (\textcolor{darkgreen}{+1.3}) 
& 71.3 (\textcolor{darkgreen}{+1.6}) 
& 56.7 (\textcolor{darkgreen}{+1.3}) 
& 9.8 (\textcolor{darkgreen}{+2.3}) 
& 9.3 (\textcolor{darkgreen}{+2.2}) 
& 62.4 (\textcolor{darkgreen}{+0.6}) 
& 72.6 (\textcolor{darkgreen}{+2.0}) \\

\rowcolor{gray!15}
+Inv-Energy
& 35.2 (\textcolor{darkgreen}{+0.8}) 
& 70.2 (\textcolor{darkgreen}{+0.5}) 
& 56.3 (\textcolor{darkgreen}{+0.9}) 
& 8.1 (\textcolor{darkgreen}{+0.6}) 
& 8.8 (\textcolor{darkgreen}{+1.7}) 
& 62.7 (\textcolor{darkgreen}{+0.9}) 
& 71.4 (\textcolor{darkgreen}{+0.8}) \\

\rowcolor{gray!15}
+Uni-Energy
& 36.4 (\textcolor{darkgreen}{+2.0}) 
& 71.9 (\textcolor{darkgreen}{+2.2}) 
& 57.2 (\textcolor{darkgreen}{+1.8}) 
& 11.1 (\textcolor{darkgreen}{+3.6}) 
& 9.9 (\textcolor{darkgreen}{+2.8}) 
& 63.2 (\textcolor{darkgreen}{+1.4}) 
& 73.7 (\textcolor{darkgreen}{+3.1}) \\

\bottomrule
\end{tabular}
\label{tab:unie_results}
\end{table*}

\textbf{Results:} Table~\ref{tab:unie_results} presents performance across benchmark datasets and full results are in Appendix~\ref{app:dllm_bench_full}. Applying Uni-E consistently improves the performance of DLLMs, with gains exceeding those of APD and DAWN. This advantage is likely due to Uni-E jointly modeling both independency and invariance. CORE has limited performance since the remask strategy is unstable. Across MATH500, GSM8K, LCB V2, MMLU, and HellaSwag, Uni-E outperforms both the AR proxy model and the corresponding DLLM, demonstrating its effectiveness. In contrast, using only independent energy (Ind-E) or invariant energy (Inv-E) yields limited improvements, which is consistent with the observations in Figure~\ref{fig:ablation}, suggesting that modeling only dependency or invariance alone is insufficient.

\begin{wrapfigure}{r}{0.30\textwidth}
    \centering
    \includegraphics[width=0.29\textwidth]{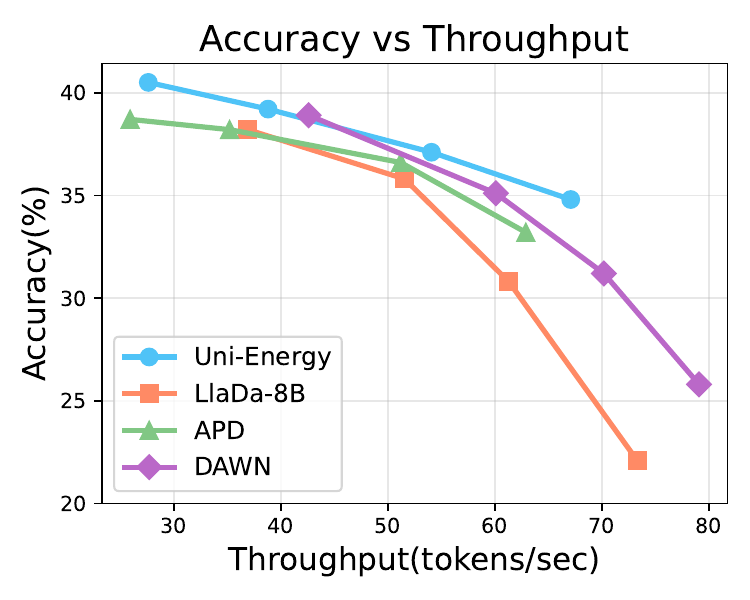}
    \caption{Efficiency Analysis}
    \label{exp:dllm_tradeoff}
\end{wrapfigure}
\textbf{Analysis:} To evaluate efficiency, we follow~\cite{israel2025accelerating} and measure throughput as the average number of tokens generated per second. We use MATH500 as an example. The results are shown in Figure~\ref{exp:dllm_tradeoff}, we observe that increasing decoding parallelism significantly degrades the performance of DLLMs. Incorporating proxy AR models such as APD or Uni-E mitigates this degradation, as they partially capture token dependencies. Notably, Uni-Energy consistently achieves better performance at the same throughput. Although it introduces extra computational cost, it improves generation quality and enables more tokens to be decoded both independently and invariantly. We provide the hyper-parameter analysis in Appendix~\ref{app:dllm_hyperparameter_study}.
\section{Conclusion}
\label{sec:conclusion}
In this paper, we first analyze the gap between Diffusion Language Models (DLMs) and the ground-truth distribution, which is governed by: (i) model capacity, (ii) dependency, and (iii) invariance. To address all these factors, we then propose an unified energy (Uni-E) that combines an invariant energy (Inv-E) and an independent energy (Ind-E). We also give theoretical analysis to show that Uni-E can be calculated efficiently and accurately, and can mitigate the distribution shift caused by dependent or non-invariant decoding. Finally, we conduct extensive experiments on both Diffusion Language Models (DLMs) and Diffusion Large Language Models (DLLMs) to demonstrate its effectiveness.

\clearpage

{
\printbibliography[heading=bibintoc,title={Bibliography}]

@article{nie2025large,
  title={Large language diffusion models},
  author={Nie, Shen and Zhu, Fengqi and You, Zebin and Zhang, Xiaolu and Ou, Jingyang and Hu, Jun and Zhou, Jun and Lin, Yankai and Wen, Ji-Rong and Li, Chongxuan},
  journal={arXiv preprint arXiv:2502.09992},
  year={2025}
}

@article{bie2025llada2,
  title={Llada2.0: Scaling up diffusion language models to 100b},
  author={Bie, Tiwei and Cao, Maosong and Chen, Kun and Du, Lun and Gong, Mingliang and Gong, Zhuochen and Gu, Yanmei and Hu, Jiaqi and Huang, Zenan and Lan, Zhenzhong and others},
  journal={arXiv preprint arXiv:2512.15745},
  year={2025}
}

@article{zhao2023survey,
  title={A survey of large language models},
  author={Zhao, Wayne Xin and Zhou, Kun and Li, Junyi and Tang, Tianyi and Wang, Xiaolei and Hou, Yupeng and Min, Yingqian and Zhang, Beichen and Zhang, Junjie and Dong, Zican and others},
  journal={arXiv preprint arXiv:2303.18223},
  volume={1},
  number={2},
  pages={1--124},
  year={2023}
}

@article{chang2024survey,
  title={A survey on evaluation of large language models},
  author={Chang, Yupeng and Wang, Xu and Wang, Jindong and Wu, Yuan and Yang, Linyi and Zhu, Kaijie and Chen, Hao and Yi, Xiaoyuan and Wang, Cunxiang and Wang, Yidong and others},
  journal={ACM transactions on intelligent systems and technology},
  volume={15},
  number={3},
  pages={1--45},
  year={2024},
  publisher={ACM New York, NY}
}

@article{li2025survey,
  title={A survey on diffusion language models},
  author={Li, Tianyi and Chen, Mingda and Guo, Bowei and Shen, Zhiqiang},
  journal={arXiv preprint arXiv:2508.10875},
  year={2025}
}

@article{zou2023survey,
  title={A survey of diffusion models in natural language processing},
  author={Zou, Hao and Kim, Zae Myung and Kang, Dongyeop},
  journal={arXiv preprint arXiv:2305.14671},
  year={2023}
}

@article{yi2024diffusion,
  title={Diffusion models in text generation: a survey},
  author={Yi, Qiuhua and Chen, Xiangfan and Zhang, Chenwei and Zhou, Zehai and Zhu, Linan and Kong, Xiangjie},
  journal={PeerJ Computer Science},
  volume={10},
  pages={e1905},
  year={2024},
  publisher={PeerJ Inc.}
}

@article{sahoo2024simple,
  title={Simple and effective masked diffusion language models},
  author={Sahoo, Subham and Arriola, Marianne and Schiff, Yair and Gokaslan, Aaron and Marroquin, Edgar and Chiu, Justin and Rush, Alexander and Kuleshov, Volodymyr},
  journal={Advances in Neural Information Processing Systems},
  volume={37},
  pages={130136--130184},
  year={2024}
}

@article{austin2021structured,
  title={Structured denoising diffusion models in discrete state-spaces},
  author={Austin, Jacob and Johnson, Daniel D and Ho, Jonathan and Tarlow, Daniel and Van Den Berg, Rianne},
  journal={Advances in neural information processing systems},
  volume={34},
  pages={17981--17993},
  year={2021}
}

@article{lou2023discrete,
  title={Discrete diffusion modeling by estimating the ratios of the data distribution},
  author={Lou, Aaron and Meng, Chenlin and Ermon, Stefano},
  journal={arXiv preprint arXiv:2310.16834},
  year={2023}
}

@article{arriola2025block,
  title={Block diffusion: Interpolating between autoregressive and diffusion language models},
  author={Arriola, Marianne and Gokaslan, Aaron and Chiu, Justin T and Yang, Zhihan and Qi, Zhixuan and Han, Jiaqi and Sahoo, Subham Sekhar and Kuleshov, Volodymyr},
  journal={arXiv preprint arXiv:2503.09573},
  year={2025}
}

@article{sahoo2025diffusion,
  title={The diffusion duality},
  author={Sahoo, Subham Sekhar and Deschenaux, Justin and Gokaslan, Aaron and Wang, Guanghan and Chiu, Justin and Kuleshov, Volodymyr},
  journal={arXiv preprint arXiv:2506.10892},
  year={2025}
}

@article{xu2024energy,
  title={Energy-based diffusion language models for text generation},
  author={Xu, Minkai and Geffner, Tomas and Kreis, Karsten and Nie, Weili and Xu, Yilun and Leskovec, Jure and Ermon, Stefano and Vahdat, Arash},
  journal={arXiv preprint arXiv:2410.21357},
  year={2024}
}

@article{liu2024discrete,
  title={Discrete copula diffusion},
  author={Liu, Anji and Broadrick, Oliver and Niepert, Mathias and Broeck, Guy Van den},
  journal={arXiv preprint arXiv:2410.01949},
  year={2024}
}

@article{li2026breaking,
  title={Breaking the Factorization Barrier in Diffusion Language Models},
  author={Li, Ian and Shao, Zilei and Wang, Benjie and Yu, Rose and Broeck, Guy Van den and Liu, Anji},
  journal={arXiv preprint arXiv:2603.00045},
  year={2026}
}

@article{luo2026dawn,
  title={DAWN: Dependency-Aware Fast Inference for Diffusion LLMs},
  author={Luo, Lizhuo and Shi, Zhuoran and Luo, Jiajun and Wang, Zhi and Ren, Shen and Wang, Wenya and Zhang, Tianwei},
  journal={arXiv preprint arXiv:2602.06953},
  year={2026}
}

@article{wu2025fast,
  title={Fast-dllm: Training-free acceleration of diffusion llm by enabling kv cache and parallel decoding},
  author={Wu, Chengyue and Zhang, Hao and Xue, Shuchen and Liu, Zhijian and Diao, Shizhe and Zhu, Ligeng and Luo, Ping and Han, Song and Xie, Enze},
  journal={arXiv preprint arXiv:2505.22618},
  year={2025}
}

@article{ye2025dream,
  title={Dream 7b: Diffusion large language models},
  author={Ye, Jiacheng and Xie, Zhihui and Zheng, Lin and Gao, Jiahui and Wu, Zirui and Jiang, Xin and Li, Zhenguo and Kong, Lingpeng},
  journal={arXiv preprint arXiv:2508.15487},
  year={2025}
}

@inproceedings{zhou2022model,
  title={Model agnostic sample reweighting for out-of-distribution learning},
  author={Zhou, Xiao and Lin, Yong and Pi, Renjie and Zhang, Weizhong and Xu, Renzhe and Cui, Peng and Zhang, Tong},
  booktitle={International conference on machine learning},
  pages={27203--27221},
  year={2022},
  organization={PMLR}
}

@article{lecun2006tutorial,
  title={A tutorial on energy-based learning},
  author={LeCun, Yann and Chopra, Sumit and Hadsell, Raia and Ranzato, M and Huang, Fujie and others},
  journal={Predicting structured data},
  volume={1},
  number={0},
  year={2006}
}

@inproceedings{haarnoja2017reinforcement,
  title={Reinforcement learning with deep energy-based policies},
  author={Haarnoja, Tuomas and Tang, Haoran and Abbeel, Pieter and Levine, Sergey},
  booktitle={International conference on machine learning},
  pages={1352--1361},
  year={2017},
  organization={PMLR}
}

@article{lavenant2025error,
  title={Error Bounds and Optimal Schedules for Masked Diffusions with Factorized Approximations},
  author={Lavenant, Hugo and Zanella, Giacomo},
  journal={arXiv preprint arXiv:2510.25544},
  year={2025}
}

@article{chen2025optimal,
  title={Optimal inference schedules for masked diffusion models},
  author={Chen, Sitan and Cong, Kevin and Li, Jerry},
  journal={arXiv preprint arXiv:2511.04647},
  year={2025}
}

@inproceedings{gutmann2010noise,
  title={Noise-contrastive estimation: A new estimation principle for unnormalized statistical models},
  author={Gutmann, Michael and Hyv{\"a}rinen, Aapo},
  booktitle={Proceedings of the thirteenth international conference on artificial intelligence and statistics},
  pages={297--304},
  year={2010},
  organization={JMLR Workshop and Conference Proceedings}
}

@book{hammersley2013monte,
  title={Monte carlo methods},
  author={Hammersley, John},
  year={2013},
  publisher={Springer Science \& Business Media}
}

@article{james1980monte,
  title={Monte Carlo theory and practice},
  author={James, Frederick},
  journal={Reports on progress in Physics},
  volume={43},
  number={9},
  pages={1145--1189},
  year={1980}
}

@misc{Gokaslan2019OpenWeb,
	title={OpenWebText Corpus},
	author={Aaron Gokaslan and Vanya Cohen},
	howpublished={\url{http://Skylion007.github.io/OpenWebTextCorpus}},
	year={2019}
}

@article{marcus1993building,
  title={Building a large annotated corpus of English: The Penn Treebank},
  author={Marcus, Mitch and Santorini, Beatrice and Marcinkiewicz, Mary Ann},
  journal={Computational linguistics},
  volume={19},
  number={2},
  pages={313--330},
  year={1993}
}

@article{merity2016pointer,
  title={Pointer sentinel mixture models},
  author={Merity, Stephen and Xiong, Caiming and Bradbury, James and Socher, Richard},
  journal={arXiv preprint arXiv:1609.07843},
  year={2016}
}

@inproceedings{paperno2016lambada,
  title={The LAMBADA dataset: Word prediction requiring a broad discourse context},
  author={Paperno, Denis and Kruszewski, Germ{\'a}n and Lazaridou, Angeliki and Pham, Ngoc-Quan and Bernardi, Raffaella and Pezzelle, Sandro and Baroni, Marco and Boleda, Gemma and Fern{\'a}ndez, Raquel},
  booktitle={Proceedings of the 54th annual meeting of the association for computational linguistics (volume 1: Long papers)},
  pages={1525--1534},
  year={2016}
}

@article{chelba2013one,
  title={One billion word benchmark for measuring progress in statistical language modeling},
  author={Chelba, Ciprian and Mikolov, Tomas and Schuster, Mike and Ge, Qi and Brants, Thorsten and Koehn, Phillipp and Robinson, Tony},
  journal={arXiv preprint arXiv:1312.3005},
  year={2013}
}

@article{zhang2015character,
  title={Character-level convolutional networks for text classification},
  author={Zhang, Xiang and Zhao, Junbo and LeCun, Yann},
  journal={Advances in neural information processing systems},
  volume={28},
  year={2015}
}

@inproceedings{cohan2018discourse,
  title={A discourse-aware attention model for abstractive summarization of long documents},
  author={Cohan, Arman and Dernoncourt, Franck and Kim, Doo Soon and Bui, Trung and Kim, Seokhwan and Chang, Walter and Goharian, Nazli},
  booktitle={Proceedings of the 2018 Conference of the North American Chapter of the Association for Computational Linguistics: Human Language Technologies, Volume 2 (Short Papers)},
  pages={615--621},
  year={2018}
}

@article{vaswani2017attention,
  title={Attention is all you need},
  author={Vaswani, Ashish and Shazeer, Noam and Parmar, Niki and Uszkoreit, Jakob and Jones, Llion and Gomez, Aidan N and Kaiser, {\L}ukasz and Polosukhin, Illia},
  journal={Advances in neural information processing systems},
  volume={30},
  year={2017}
}

@article{radford2019language,
  title={Language models are unsupervised multitask learners},
  author={Radford, Alec and Wu, Jeffrey and Child, Rewon and Luan, David and Amodei, Dario and Sutskever, Ilya and others},
  journal={OpenAI blog},
  volume={1},
  number={8},
  pages={9},
  year={2019}
}

@article{israel2025accelerating,
  title={Accelerating diffusion llms via adaptive parallel decoding},
  author={Israel, Daniel and Broeck, Guy Van den and Grover, Aditya},
  journal={arXiv preprint arXiv:2506.00413},
  year={2025}
}

@article{gao2025self,
  title={Self speculative decoding for diffusion large language models},
  author={Gao, Yifeng and Ji, Ziang and Wang, Yuxuan and Qi, Biqing and Xu, Hanlin and Zhang, Linfeng},
  journal={arXiv preprint arXiv:2510.04147},
  year={2025}
}

@article{yang2025qwen3,
  title={Qwen3 technical report},
  author={Yang, An and Li, Anfeng and Yang, Baosong and Zhang, Beichen and Hui, Binyuan and Zheng, Bo and Yu, Bowen and Gao, Chang and Huang, Chengen and Lv, Chenxu and others},
  journal={arXiv preprint arXiv:2505.09388},
  year={2025}
}

@article{cobbe2021training,
  title={Training verifiers to solve math word problems},
  author={Cobbe, Karl and Kosaraju, Vineet and Bavarian, Mohammad and Chen, Mark and Jun, Heewoo and Kaiser, Lukasz and Plappert, Matthias and Tworek, Jerry and Hilton, Jacob and Nakano, Reiichiro and others},
  journal={arXiv preprint arXiv:2110.14168},
  year={2021}
}

@article{hendrycks2021measuring,
  title={Measuring mathematical problem solving with the math dataset},
  author={Hendrycks, Dan and Burns, Collin and Kadavath, Saurav and Arora, Akul and Basart, Steven and Tang, Eric and Song, Dawn and Steinhardt, Jacob},
  journal={arXiv preprint arXiv:2103.03874},
  year={2021}
}

@article{jain2024livecodebench,
  title={Livecodebench: Holistic and contamination free evaluation of large language models for code},
  author={Jain, Naman and Han, King and Gu, Alex and Li, Wen-Ding and Yan, Fanjia and Zhang, Tianjun and Wang, Sida and Solar-Lezama, Armando and Sen, Koushik and Stoica, Ion},
  journal={arXiv preprint arXiv:2403.07974},
  year={2024}
}

@article{white2024livebench,
  title={Livebench: A challenging, contamination-free llm benchmark},
  author={White, Colin and Dooley, Samuel and Roberts, Manley and Pal, Arka and Feuer, Ben and Jain, Siddhartha and Shwartz-Ziv, Ravid and Jain, Neel and Saifullah, Khalid and Naidu, Siddartha and others},
  journal={arXiv preprint arXiv:2406.19314},
  volume={4},
  pages={2},
  year={2024}
}

@article{hendrycks2020measuring,
  title={Measuring massive multitask language understanding},
  author={Hendrycks, Dan and Burns, Collin and Basart, Steven and Zou, Andy and Mazeika, Mantas and Song, Dawn and Steinhardt, Jacob},
  journal={arXiv preprint arXiv:2009.03300},
  year={2020}
}

@inproceedings{zellers2019hellaswag,
  title={Hellaswag: Can a machine really finish your sentence?},
  author={Zellers, Rowan and Holtzman, Ari and Bisk, Yonatan and Farhadi, Ali and Choi, Yejin},
  booktitle={Proceedings of the 57th annual meeting of the association for computational linguistics},
  pages={4791--4800},
  year={2019}
}

@article{austin2021program,
  title={Program synthesis with large language models},
  author={Austin, Jacob and Odena, Augustus and Nye, Maxwell and Bosma, Maarten and Michalewski, Henryk and Dohan, David and Jiang, Ellen and Cai, Carrie and Terry, Michael and Le, Quoc and others},
  journal={arXiv preprint arXiv:2108.07732},
  year={2021}
}

@article{grattafiori2024llama,
  title={The llama 3 herd of models},
  author={Grattafiori, Aaron and Dubey, Abhimanyu and Jauhri, Abhinav and Pandey, Abhinav and Kadian, Abhishek and Al-Dahle, Ahmad and Letman, Aiesha and Mathur, Akhil and Schelten, Alan and Vaughan, Alex and others},
  journal={arXiv preprint arXiv:2407.21783},
  year={2024}
}

@article{wang2025revolutionizing,
  title={Revolutionizing reinforcement learning framework for diffusion large language models},
  author={Wang, Yinjie and Yang, Ling and Li, Bowen and Tian, Ye and Shen, Ke and Wang, Mengdi},
  journal={arXiv preprint arXiv:2509.06949},
  year={2025}
}

@article{hoogeboom2021autoregressive,
  title={Autoregressive diffusion models},
  author={Hoogeboom, Emiel and Gritsenko, Alexey A and Bastings, Jasmijn and Poole, Ben and Berg, Rianne van den and Salimans, Tim},
  journal={arXiv preprint arXiv:2110.02037},
  year={2021}
}

@article{campbell2022continuous,
  title={A continuous time framework for discrete denoising models},
  author={Campbell, Andrew and Benton, Joe and De Bortoli, Valentin and Rainforth, Thomas and Deligiannidis, George and Doucet, Arnaud},
  journal={Advances in Neural Information Processing Systems},
  volume={35},
  pages={28266--28279},
  year={2022}
}

@article{wang2025remasking,
  title={Remasking discrete diffusion models with inference-time scaling},
  author={Wang, Guanghan and Schiff, Yair and Sahoo, Subham Sekhar and Kuleshov, Volodymyr},
  journal={arXiv preprint arXiv:2503.00307},
  year={2025}
}

@article{yao2026remask,
  title={Remask, Don't Replace: Token-to-Mask Refinement in Masked Diffusion Language Models},
  author={Yao, Lin},
  journal={arXiv preprint arXiv:2604.18738},
  year={2026}
}

@article{zhai2026core,
  title={CoRe: Context-Robust Remasking for Diffusion Language Models},
  author={Zhai, Kevin and Mollah, Sabbir and Wang, Zhenyi and Shah, Mubarak},
  journal={arXiv preprint arXiv:2602.04096},
  year={2026}
}
}

\clearpage

\section{Appendix}

\subsection{Training Algorithm}
\label{app:training_alg}

We provide the NCE training algorithm for our Uni-EDLM in Algorithm~\ref{alg:training}.
\begin{algorithm}[H]
\caption{Training Uni-EDLM with Noise Contrastive Estimation (NCE)}
\label{alg:training}
\begin{algorithmic}[1]
\REQUIRE Training dataset $\mathcal{D}$, AR model $p_{\text{AR}}$, diffusion model $p_\theta$, learning rate $\eta$
\STATE Freeze parameters of $p_{\text{AR}}$
\WHILE{not converged}
    \STATE Sample clean data $\mathbf{x}_0 \sim \mathcal{D}$ and diffusion timestep $t \sim \mathcal{U}(0,1)$
    \STATE Sample $\mathbf{x}_t \sim q(\mathbf{x}_t | \mathbf{x}_0)$
    \STATE Sample $s$ with $s < t$ and $\mathbf{x}_s \sim q(\mathbf{x}_s | \mathbf{x}_t, \mathbf{x}_0)$
    \STATE Compute energy:
    \[
    E_\phi(\mathbf{x}_0, \mathbf{x}_s, \mathbf{x}_t) = \log \frac{p_\theta(\mathbf{x}_s|\mathbf{x}_t)}{p_{\text{AR}}(\mathbf{x}_s|\mathbf{x}_t)} + \log \frac{p_\theta(\mathbf{x}_0|\mathbf{x}_t)}{p_{\text{AR}}(\mathbf{x}_0|\mathbf{x}_s)}
    \]
    
    \STATE Sample $\hat{\mathbf{x}}_0 \sim p_\theta(\mathbf{x}_0 | \mathbf{x}_t)$
    \STATE Sample $\hat{\mathbf{x}}_s \sim q(\mathbf{x}_s | \mathbf{x}_t, \hat{\mathbf{x}}_0)$
    \STATE Compute energy:
    \[
    E_\phi(\hat{\mathbf{x}}_0, \hat{\mathbf{x}}_s, \mathbf{x}_t) = \log \frac{p_\theta(\hat{\mathbf{x}}_s|\mathbf{x}_t)}{p_{\text{AR}}(\hat{\mathbf{x}}_s|\mathbf{x}_t)} + \log \frac{p_\theta(\hat{\mathbf{x}}_0|\mathbf{x}_t)}{p_{\text{AR}}(\hat{\mathbf{x}}_0|\mathbf{x}_s)}
    \]
    
    \STATE Compute loss:
    \[
    \mathcal{L}_{\text{NCE}} =
    \log \left(1 + \exp(E_\phi(\mathbf{x}_0, \mathbf{x}_s, \mathbf{x}_t)) \right)
    + \log \left(1 + \exp(-E_\phi(\hat{\mathbf{x}}_0, \hat{\mathbf{x}}_s, \mathbf{x}_t)) \right)
    \]
    
    \STATE Update $\theta \leftarrow \theta - \eta \nabla_\theta \mathcal{L}_{\text{NCE}}$
\ENDWHILE
\RETURN trained parameters $\theta$
\end{algorithmic}
\label{alg:train}
\end{algorithm}

\subsection{Inference Algorithm}
\label{app:sampling_alg}
We provide the inference of Uni-EDLM in Algorithm~\ref{alg:inference}.

\begin{algorithm}[H]
\caption{Uni-EDLM Inference with Importance Sampling}
\label{alg:inference}
\begin{algorithmic}[1]
\REQUIRE Full masked sequence $\mathbf{x}_{\mathcal{T}_{1}}$, diffusion timesteps $\mathcal{T}_{1}>\mathcal{T}_{2}>...>\mathcal{T}_{T}$, candidate size $k$, window size $w$, proxy AR model $p_{\text{AR}}$, diffusion model $p_\theta$
\FOR{$t = \mathcal{T}_{1},\mathcal{T}_{2}, \dots, \mathcal{T}_{T}$}
    
    \IF{$t \in (1-w, 1]$}        
        \STATE Sample $\{\mathbf{x}_0^{1},..,\mathbf{x}_0^{k} \} \sim p_\theta(\mathbf{x}_0 | \mathbf{x}_{t})$ and sample timestep $s < t$
        \STATE Sample $\{\mathbf{x}_s^{1},..,\mathbf{x}_s^{k} \} \sim q(\mathbf{x}_s | \mathbf{x}_t, \mathbf{x}_0^{(j)})$ with $j\in\{1,2,..,k\}$
        \STATE Compute energy $E^{(j)} = E_\phi(\mathbf{x}_0^{(j)}, \mathbf{x}_s^{(j)}, \mathbf{x}_t)$ for each $(\mathbf{x}_0^{(j)}, \mathbf{x}_s^{(j)})$ pair
        \STATE $j^* = \arg\min_j E^{(j)}$
        \STATE $\tilde{\mathbf{x}}_0 = \mathbf{x}_0^{(j^*)}$
    \ELSE
        \STATE Sample $\tilde{\mathbf{x}}_0 \sim p_\theta(\mathbf{x}_0 | \mathbf{x}_t)$
    \ENDIF
    \STATE Sample $\mathbf{x}_{t-1} \sim q(\mathbf{x}_{t-1} | \mathbf{x}_t, \tilde{\mathbf{x}}_0)$
    
\ENDFOR
\RETURN $\mathbf{x}_0$
\end{algorithmic}
\label{alg:inference}
\end{algorithm}

\subsection{Inference Algorithm for Diffusion Large Language Models (DLLMs)}
\label{app:sampling_alg_dllm}
We provide the inference algorithm when integrating with Diffusion Large Language Models (DLLMs) in Algorithm~\ref{alg:inference_dllm}.
\begin{algorithm}[H]
\caption{Energy-Guided Inference for Diffusion Large Language Models (DLLMs)}
\label{alg:dllm_energy_sampling}
\begin{algorithmic}[1]
\REQUIRE Full masked sequence $\mathbf{x}_{1}$ with max length $L$, candidate size $k$, number of unmask positions $m$ for each step, window size $w$, DLLM model $p_\theta$, AR model $p_{\text{AR}}$
\FOR{$t=1, 1-\frac{m}{L}, 1-\frac{2m}{L}, ..., 0$}
\STATE Obtain token logits from DLLM: $p_\theta(\mathbf{x}_0|\mathbf{x}_t)$
\IF{$t \in (1-w, 1]$}
\FOR{$j = 1$ to $k$}
    \STATE Sample clean candidate: $\mathbf{x}_0^{(j)} \sim p_\theta(\mathbf{x}_0|\mathbf{x}_t)$
    \STATE Compute token-level logits $\{\ell_i^{(j)}\}_{i=1}^L$ for $\mathbf{x}_0^{(j)}$
    \STATE Select index set $\mathcal{I}^{(j)}$ corresponding to top-$m$ logits
    
    \STATE Generate $\mathbf{x}_s^{(j)}$:
    \[
    \mathbf{x}_{s,i}^{(j)} =
    \begin{cases}
        \mathbf{x}_{0,i}^{(j)}, & i \in \mathcal{I}^{(j)} \\
        \text{[MASK]}, & \text{otherwise}
    \end{cases}
    \]
    
    \STATE Compute energy for each $(\mathbf{x}_0^{(j)}, \mathbf{x}_s^{(j)})$ pair
    \[
    E^{(j)} =
    \log \frac{p_\theta(\mathbf{x}_s^{(j)} | \mathbf{x}_t)}{p_{\text{AR}}(\mathbf{x}_s^{(j)} | \mathbf{x}_t)}
    + \log \frac{p_\theta(\mathbf{x}_0^{(j)} | \mathbf{x}_t)}{p_{\text{AR}}(\mathbf{x}_0^{(j)} | \mathbf{x}_s^{(j)})}
    \]
\ENDFOR
\STATE $j^* = \arg\min_j E^{(j)}$
\STATE $\mathbf{x}_{t} = \mathbf{x}_s^{(j^*)}$

\ELSE
\STATE Unmask $m$ tokens with highest logits within $\mathbf{x}_{t}$ from $p_\theta(\mathbf{x}_0|\mathbf{x}_t)$

\ENDIF
\ENDFOR

\RETURN $\mathbf{x}_0$
\end{algorithmic}
\label{alg:inference_dllm}
\end{algorithm}

\subsection{Proof of Invariance Divergence Estimation}
\label{app:lemma_proof}
Here we give a formal proof to the Lemma~\ref{theory:lemma_1} as follows:
\begin{proof}
According to the invariance defination, we have:
\begin{equation}
\begin{aligned}
    \pi(\mathbf{x}_{0}^{i} | \mathbf{x}_{t+1}, \mathbf{x}_{0}^{\mathcal{V}^{-}_{i}}, \mathbf{x}_{0}^{Z_{<t}}-\mathbf{x}_{0}^{\mathcal{V}^{-}_{i}}) = \pi(\mathbf{x}_{0}^{i} | \mathbf{x}_{t+1}, \mathbf{x}_{0}^{\mathcal{V}^{-}_{i}})
\end{aligned}
\end{equation}
Then we have:
\begin{equation}
\begin{aligned}
    \frac{\pi(\mathbf{x}_{0}^{i}, \mathbf{x}_{0}^{Z_{<t}}-\mathbf{x}_{0}^{\mathcal{V}^{-}_{i}} | \mathbf{x}_{t+1}, \mathbf{x}_{0}^{\mathcal{V}^{-}_{i}})}{\pi(\mathbf{x}_{0}^{Z_{<t}}-\mathbf{x}_{0}^{\mathcal{V}^{-}_{i}} | \mathbf{x}_{t+1}, \mathbf{x}_{0}^{\mathcal{V}^{-}_{i}})} = \pi(\mathbf{x}_{0}^{i} | \mathbf{x}_{t+1}, \mathbf{x}_{0}^{\mathcal{V}^{-}_{i}})
\end{aligned}
\end{equation}
Therefore we have:
\begin{equation}
\begin{aligned}
    \pi(\mathbf{x}_{0}^{Z_{<t}}-\mathbf{x}_{0}^{\mathcal{V}^{-}_{i}} | \mathbf{x}_{0}^{i}, \mathbf{x}_{t+1}, \mathbf{x}_{0}^{\mathcal{V}^{-}_{i}}) = \pi(\mathbf{x}_{0}^{Z_{<t}}-\mathbf{x}_{0}^{\mathcal{V}^{-}_{i}} | \mathbf{x}_{t+1}, \mathbf{x}_{0}^{\mathcal{V}^{-}_{i}})
\end{aligned}
\end{equation}
The intuitive understanding is that the $i$-th token has no influence on the rest tokens beyond invariant ones. Given the invriance part defination: $\pi(\textbf{x}^{i}_{0} | \textbf{x}_{0} - \textbf{x}_{0}^{i}) = \pi(\textbf{x}_{0}^{i} | \textbf{x}_{0}^{\mathcal{V}_{i}}) = \pi(\textbf{x}_{0}^{i} | \textbf{x}_{t+1}, \textbf{x}_{0}^{\mathcal{V}^{-}_{i}})$, we know that $\pi(\textbf{x}_{0}^{\mathcal{V}^{-}_{i}}|\textbf{x}_{0}^{i}, \textbf{x}_{t+1}, \mathbf{x}_{0}^{Z_{<t}}-\mathbf{x}_{0}^{\mathcal{V}^{-}_{i}}) = \pi(\textbf{x}_{0}^{\mathcal{V}^{-}_{i}}|\textbf{x}_{0}^{i}, \textbf{x}_{t+1})$, this means the invariant tokens $\mathbf{x}_{0}^{\mathcal{V}^{-}_{i}}$ of $\mathbf{x}_{0}^{i}$ is also invariant to $\mathbf{x}_{0}^{Z_{<t}}-\mathbf{x}_{0}^{\mathcal{V}^{-}_{i}}$. This indicate a key insight that \textbf{the decoding of $\textbf{x}_{0}^{i}$ will only influence the invariant tokens and this influence will not passed to the non-invariant parts}.

\begin{table}[h!]
\center
\begin{tabular}{|c|c|c|c|c|c|c|c|c|}
\hline
                & $T_{i}^{1}$ & $T_{i}^{2}$ & ... & $T_{i}^{x}$ & $T_{n}^{1}$   & $T_{n}^{2}$   & ... & $T_{n}^{y}$   \\ \hline
Before Decoding & $p_{1}$ & $p_{2}$ & ...   & $p_{x}$ & $g_{1}$ & $g_{2}$ & ... & $g_{y}$ \\ \hline
After Decoding  & $q_{1}$ & $q_{2}$ & ...   & $q_{x}$ & $g_{1}$ & $g_{2}$ & ... & $g_{y}$ \\ \hline
After Sampling  & $q_{1}$ & $q_{2}$ & ...   & $p_{x}$ & $g_{1}$ & $g_{2}$ & ... & $g_{y}$ \\ \hline
\end{tabular}
\caption{The distribution shift before and after decoding token $\textbf{x}_{0}^{i}$}
\label{appedix:table_1}
\end{table}

Assuming in time step $t$, we decode a token $\textbf{x}_{0}^{i}$, the rest part of $\textbf{x}_{0}$ distribution before and after decoding is shown as the first two row in Table~\ref{appedix:table_1}, where $T_{i}^{j}$ denotes the $j$-th invariant token and $T_{n}^{k}$ denote the $k$-th non-invariant token. The decoding has no influence on the non-invariant tokens while the invariant one will have distribution shift from $p_{i}$ to $q_{i}$. We can easily conclude that:

\begin{equation}
\begin{aligned}
    log \frac{\pi(\mathbf{x}_{0}^{\mathcal{V}^{-}_{i}}|\mathbf{x}_{t+1})}{\pi(\mathbf{x}_{0}^{\mathcal{V}^{-}_{i}}|\mathbf{x}_{i}, \mathbf{x}_{t+1})} = log \frac{\pi(\mathbf{x}_{0}^{Z_{<t}}|\mathbf{x}_{t+1})}{\pi(\mathbf{x}_{0}^{Z_{<t}}|\mathbf{x}_{i}, \mathbf{x}_{t+1})} = \sum_{i}^{x} KL(p_{i} || q_{i}) + \sum_{i}^{y} KL(g_{i} || g_{i}) = \sum_{i}^{x} KL(p_{i} || q_{i})
\end{aligned}
\end{equation}
Therefore if we sample all the $Z_{<t}$ we will have the full esitmation of invariant part distribution. Next we will prove the estimation when sampling. Here we sample $r$ tokens from $Z_{<t}$ and get index $Z_{s}$, and after sampling the distribution is shown as the third line in Table~\ref{appedix:table_1}. Suppose we sample the first $m$ invariant and the rest $n$ tokens are non-invariant($m+n=r$), the first $m$ distribution will be influenced by $\textbf{x}_{0}^{i}$ when calculating estimation, which is $q_{i}$($i \leq m$) and the remains are the same as the non-decoding $\textbf{x}_{0}^{i}$(The "Before Decoding" line in Table~\ref{appedix:table_1}). The sampling estimation is:

\begin{equation}
\begin{aligned}
    log \frac{\pi(\mathbf{x}_{0}^{Z_{s}}|\mathbf{x}_{t+1})}{\pi(\mathbf{x}_{0}^{Z_{s}}|\mathbf{x}_{i}, \mathbf{x}_{t+1})} = \sum_{i=1}^{m} KL(p_{i} || q_{i}) + \sum_{i=m+1}^{x} KL(p_{i} || p_{i}) + \sum_{i=1}^{y} KL(g_{i} || g_{i}) = \sum_{i}^{m} KL(p_{i} || q_{i})
\end{aligned}
\end{equation}

Therefore, we have:
\begin{equation}
\begin{aligned}
    log \frac{\pi(\mathbf{x}_{0}^{\mathcal{V}^{-}_{i}}|\mathbf{x}_{t+1})}{\pi(\mathbf{x}_{0}^{\mathcal{V}^{-}_{i}}|\mathbf{x}_{i}, \mathbf{x}_{t+1})} - log \frac{\pi(\mathbf{x}_{0}^{Z_{s}}|\mathbf{x}_{t+1})}{\pi(\mathbf{x}_{0}^{Z_{s}}|\mathbf{x}_{i}, \mathbf{x}_{t+1})} = \sum_{i=m+1}^{x} KL(p_{i} || q_{i})
\end{aligned}
\label{eq:app_lemma_1}
\end{equation}

Considering the Equation~\ref{eq:app_lemma_1} just describe one specific situation that we select the first $m$-th invariant tokens, when we traverse all the possible sampling of $Z_{s}$ and calculate the average, it can be deduced easily that the possibility of picking each token is $\frac{r}{x+y}$, leading to the average residual term as:
\begin{equation}
\begin{aligned}
    \mathbb{E}_{Z_{s}} \left\{ log \frac{\pi(\mathbf{x}_{0}^{\mathcal{V}^{-}_{i}}|\mathbf{x}_{t+1})}{\pi(\mathbf{x}_{0}^{\mathcal{V}^{-}_{i}}|\mathbf{x}_{i}, \mathbf{x}_{t+1})} - log \frac{\pi(\mathbf{x}_{0}^{Z_{s}}|\mathbf{x}_{t+1})}{\pi(\mathbf{x}_{0}^{Z_{s}}|\mathbf{x}_{i}, \mathbf{x}_{t+1})} \right\} = (1-\frac{r}{x+y})\sum_{i=1}^{x} KL(p_{i} || q_{i})
\end{aligned}
\end{equation}
Therefore we have:
\begin{equation}
\begin{aligned}
    \mathbb{E}_{Z_{s}} \left\{ log \frac{\pi(\mathbf{x}_{0}^{Z_{s}}|\mathbf{x}_{t+1})}{\pi(\mathbf{x}_{0}^{Z_{s}}|\mathbf{x}_{i}, \mathbf{x}_{t+1})} \right\} = (\frac{r}{x+y})\sum_{i=1}^{x} KL(p_{i} || q_{i}) = (\frac{r}{x+y}) log \frac{\pi(\mathbf{x}_{0}^{\mathcal{V}^{-}_{i}}|\mathbf{x}_{t+1})}{\pi(\mathbf{x}_{0}^{\mathcal{V}^{-}_{i}}|\mathbf{x}_{i}, \mathbf{x}_{t+1})}
\end{aligned}
\end{equation}
So if we use $p_{AR}(.)$ as the proxy, we finally get:
\begin{equation}
\begin{aligned}
    \mathbb{E}_{Z_{s}} \left\{ log \frac{p_{AR}(\mathbf{x}_{0}^{Z_{s}}|\mathbf{x}_{t+1})}{p_{AR}(\mathbf{x}_{0}^{Z_{s}}|\mathbf{x}_{i}, \mathbf{x}_{t+1})} \right\} = (\frac{r}{x+y}) log \frac{\pi(\mathbf{x}_{0}^{\mathcal{V}^{-}_{i}}|\mathbf{x}_{t+1})}{\pi(\mathbf{x}_{0}^{\mathcal{V}^{-}_{i}}|\mathbf{x}_{i}, \mathbf{x}_{t+1})} + \gamma
\end{aligned}
\end{equation}
Therefore, we prove Lemma~\ref{theory:lemma_1}.

\end{proof}

\subsection{Proof of Uni-EDLM's Effectiveness}
\label{app:lemma_2_proof}
Here we give a formal proof to the Lemma~\ref{theory:lemma_2} as follows:
\begin{proof}
Considering all the possible decoding $s$ and $t$ pairs on clean data $\mathbf{x}_{0}$, the average KL Divergence of diffusion language model is defined as:
\begin{equation}
\begin{aligned}
    KL(\pi||p_{\theta}) = \mathbb{E}_{\mathbf{x}_{t}} \left\{\sum_{s(s<t)} \Omega(\mathbf{x}_{s}|\mathbf{x}_{t}) log \frac{\pi(\mathbf{x}_{s}|\mathbf{x}_{t})}{p_{\theta}(\mathbf{x}_{s}|\mathbf{x}_{t})} \right\},
\end{aligned}
\end{equation}
where the $\Omega(\mathbf{x}_{s}|\mathbf{x}_{t})$ is the ground truth decoding planner distribution describing the true possibility of decoding position in $\mathbf{x}_{s}$ based on $\mathbf{x}_{t}$, the average KL Divergence for Uni-EDLM is:
\begin{equation}
\begin{aligned}
    KL(\pi||p_{\phi,\theta}) = \mathbb{E}_{\mathbf{x}_{t}} \left\{\sum_{s(s<t)} \Omega(\mathbf{x}_{s}|\mathbf{x}_{t}) log \frac{\pi(\mathbf{x}_{s}|\mathbf{x}_{t})}{p_{\phi, \theta}(\mathbf{x}_{s}|\mathbf{x}_{t})} \right\},
\end{aligned}
\end{equation}
Now giving specific $t$, we consider the situation that the decoding positions do not fulfill the independence or invariance. Supposing such decodings in descendent violation degree are $Z_{1}$, $Z_{2}$, ..., $Z_{n}$($Z_{n}$ is the most "independent \& invariant" one and $Z_{1}$ is the most "non-independent\&non-invariant" one), we have:
\begin{equation}
\begin{aligned}
    \Omega(\mathbf{x}_{s}^{Z_{1}}|\mathbf{x}_{t}) < \Omega(\mathbf{x}_{s}^{Z_{2}}|\mathbf{x}_{t}) < ... < \Omega(\mathbf{x}_{s}^{Z_{n}}|\mathbf{x}_{t}),
\end{aligned}
\end{equation}
since $\Omega(.)$ represents the real possiblity of decoding order. We also assume that the diffusion language model unfortunately can not distinguish them, where:
\begin{equation}
\begin{aligned}
    p_{\theta}(\mathbf{x}_{s}^{Z_{1}}|\mathbf{x}_{t}) \approx p_{\theta}(\mathbf{x}_{s}^{Z_{2}}|\mathbf{x}_{t}) \approx ... \approx p_{\theta}(\mathbf{x}_{s}^{Z_{n}}|\mathbf{x}_{t}),
\end{aligned}
\end{equation}
According to the Uni-Energy, it will allocate lower energy to the independent and invariant decodings while allocate higher energy to the non-independent or non-invariant decodings, thus:
\begin{equation}
\begin{aligned}
    E_{uni}(\mathbf{x}_{0}, \mathbf{x}_{s}^{Z_{1}},\mathbf{x}_{t}) > E_{uni}(\mathbf{x}_{0}, \mathbf{x}_{s}^{Z_{2}}, \mathbf{x}_{t}) >... > E_{uni}(\mathbf{x}_{0}, \mathbf{x}_{s}^{Z_{n}}, \mathbf{x}_{t}),
\end{aligned}
\end{equation}
Thus we have:
\begin{equation}
\begin{aligned}
    & \frac{exp(-E_{uni}(\mathbf{x}_{0}, \mathbf{x}_{s}^{Z_{1}},\mathbf{x}_{t}))}{F(\mathbf{x}_{t})}p_{\theta}(\mathbf{x}_{s}^{Z_{1}}|\mathbf{x}_{t}) < \frac{exp(-E_{uni}(\mathbf{x}_{0}, \mathbf{x}_{s}^{Z_{2}},\mathbf{x}_{t}))}{F(\mathbf{x}_{t})}p_{\theta}(\mathbf{x}_{s}^{Z_{2}}|\mathbf{x}_{t}) \\
    & <... < \frac{exp(-E_{uni}(\mathbf{x}_{0}, \mathbf{x}_{s}^{Z_{n}},\mathbf{x}_{t}))}{F(\mathbf{x}_{t})}p_{\theta}(\mathbf{x}_{s}^{Z_{n}}|\mathbf{x}_{t}),
\end{aligned}
\end{equation}
We also know that all the possibility are normalized, which means:
\begin{equation}
\begin{aligned}
    \sum_{s,i} \frac{exp(-E_{uni}(\mathbf{x}_{0}, \mathbf{x}_{s}^{Z_{i}},\mathbf{x}_{t}))}{F(\mathbf{x}_{t})}p_{\theta}(\mathbf{x}_{s}^{Z_{i}}|\mathbf{x}_{t}) = 1, 
\end{aligned}
\end{equation}
and
\begin{equation}
\begin{aligned}
    \sum_{s,i} p_{\theta}(\mathbf{x}_{s}^{Z_{i}}|\mathbf{x}_{t}) = 1, 
\end{aligned}
\end{equation}
Therefore we have:
\begin{equation}
\begin{aligned}
    \sum_{s,i} \Omega(\mathbf{x}_{s}^{Z_{i}}|\mathbf{x}_{t}) log \frac{exp(-E_{uni}(\mathbf{x}_{0}, \mathbf{x}_{s}^{Z_{i}},\mathbf{x}_{t}))}{F(\mathbf{x}_{t})}p_{\theta}(\mathbf{x}_{s}^{Z_{i}}|\mathbf{x}_{t}) > \sum_{s,i} \Omega(\mathbf{x}_{s}^{Z_{i}}|\mathbf{x}_{t}) log p_{\theta}(\mathbf{x}_{s}^{Z_{i}}|\mathbf{x}_{t})
\end{aligned}
\end{equation}
Therefore, we have:
\begin{equation}
\begin{aligned}
    KL(\pi||p_{\phi,\theta}) < KL(\pi||p_{\theta}).
\end{aligned}
\end{equation}
\end{proof}

\subsection{Settings on Diffusion Language Model Experiments}
\label{app:dlm_settings}
\subsubsection{Metrics Details}
Perplexity (PPL) is used to evaluate the quality of a language model by measuring how well it predicts a sequence of tokens. Given a sequence $\mathbf{x} = (\mathbf{x}_{1}, \mathbf{x}_{2}, ..., \mathbf{x}_{L})$ of length $L$, PPL is calculated as: $PPL(\mathbf{x}) = exp(-\frac{1}{L} \sum_{i=1}^{L} log p_{\theta}(\mathbf{x}_{i}))$. 

Generative Perplexity (Gen PPL) evaluates the quality of text generated by a model using a reference language model. Instead of conditioning on ground-truth prefixes, it measures how likely a generated sequence is. Given a generated sequence $\hat{\mathbf{x}} = (\hat{\mathbf{x}}_{1}, \hat{\mathbf{x}}_{2}, ..., \hat{\mathbf{x}}_{L})$, Gen PPL is calculated as: $GenPPL(\hat{\mathbf{x}})=exp(-\frac{1}{L} \sum_{i=1}^{L} log p_{eval}(\hat{\mathbf{x}}_{i}|\hat{\mathbf{x}}_{<i}))$, where $p_{eval}$ is the reference model and usually it's a Large Language Model.

\subsubsection{Baseline Settings}
As for auto-regressive Transformer (AR)~\cite{vaswani2017attention}, SEDD~\cite{lou2023discrete}, MDLM~\cite{sahoo2024simple} and EDLM~\cite{xu2024energy}, we use the official model and dataset settings. Therefore, we directly report Perplexity (PPL) from~\cite{xu2024energy}. For Generative Perplexity (Gen PPL), we reproduce results with 128 timesteps, while the remaining results are taken from~\cite{xu2024energy}.

Duo~\cite{sahoo2025diffusion} uses the same dataset settings as ours, and we report its official PPL results. As for Generative Perplexity (Gen PPL), we vary the decoding steps (128, 256, 512, and 1024) and use the official checkpoint, which is trained on the same dataset (OpenWebText).

For Block Diffusion~\cite{arriola2025block}, we set the block size as 4, following the default setting in the original paper, which generally achieves better performance than larger block sizes (e.g., 8 or 16). For Gen PPL, we use the published checkpoint trained on OpenWebText, and evaluate with decoding timesteps of 128, 256, 512, and 1024 while keeping the block size fixed at 4.

For DCD~\cite{liu2024discrete}, we use GPT-2-small~\cite{radford2019language} as the AR proxy (copula) model and MDLM~\cite{sahoo2024simple} as the diffusion backbone.

For ReMDM~\cite{wang2025remasking}, we use MDLM as backbone and set the ReMDM-cap parameter as 0.05 and ReMDM-rescale as 0.5 because it generally leads to better performance in its official report.

\subsection{Comparison with Speculative Decodings}
\label{app:uni_e_comparison_sec}
When integrating with Diffusion Large Language Models (DLLMs), our Uni-E is similar to speculative decoding methods~\cite{israel2025accelerating,gao2025self}, which also use a proxy model to calculate scores or logits to accelerate decoding or improve generation quality. However, the key difference is that \textit{existing speculative decoding methods for DLLMs are generally heuristic and integrate the proxy model without guarantees for invariant decoding}. 

Taking APD~\cite{israel2025accelerating} as an example, it combines the AR proxy model with the DLLM in a linear manner, where the model produces logits as: $model\_logits = Softmax(R*p_{\theta}(\mathbf{x}_{t}|\mathbf{x}_{<t})+ (1-R)*p_{AR}(\mathbf{x}_{t}|\mathbf{x}_{<t}))$, where $p_{\theta}(.)$ and $p_{AR}(.)$ are the DLLM and AR proxy model, respectively. This design can model dependencies to some extend, but cannot capture invariance (i.e., future decoding effects). 

In contrast, is explicitly designed to model both dependency and invariance. Moreover, the formulation of our energy function has a clear theoretical foundation, showing that it can bridge the gap between the ground-truth distribution and the diffusion model distribution (see Eq.~\ref{eq:method_reweight} and Eq.~\ref{eq:id_energy}). A comparison between Uni-E and speculative decoding is summarized in Table~\ref{tab:comparison_uni_e}.
\begin{table}[htbp]
\centering
\small
\caption{Comparison between Speculative Decoding and Uni-E.}
\begin{tabular}{l|cccc}
\toprule
Method & AR Proxy & Dependency & Invariance & Theoretical Guarantee \\
\midrule
Speculative Decoding & Yes & Yes & No & No \\
Uni-E & Yes & Yes & Yes & Yes \\
\bottomrule
\end{tabular}
\label{tab:comparison_uni_e}
\end{table}

\subsection{Comparison with Remask Strategies}
\label{app:uni_e_comparison_remask_sec}
To address token relationship issues, several methods~\cite{wang2025remasking,yao2026remask,wang2025remasking} have proposed using a remask strategy to correct decoding errors. The basic idea behind these methods is to remask previously unmasked tokens during future decoding steps, allowing errors caused by dependency or invariance issues to be corrected. However, these approaches are generally heuristic and lack a theoretical foundation to guarantee dependency or invariance modeling. The selection of tokens to be corrected is also highly dependent on heuristic strategies or manually designed hyperparameters, without theoretical guarantee. For example, ReMDM~\cite{wang2025remasking} uses a hyper-parameter and a scale-score to remask tokens at each step, which view the tokens equally important. Core~\cite{wang2025remasking} remask based on neighbor stability test, which only consider the nearby tokens. None of these methods have theoretical guarantee that the remasking can truly fix the dependency and invariance error. In addition, these methods usually require more decoding timesteps to achieve good performance, since remasking and re-predicting tokens introduces additional decoding steps. A detailed comparison between these methods and our Uni-E is summarized in Table~\ref{tab:comparison_uni_e_remask}.

\begin{table}[htbp]
\centering
\small
\caption{Comparison between Remask Strategy and Uni-E.}
\begin{tabular}{l|cccccc}
\toprule
Method & AR Proxy & Dependency & Invariance & Theoretical Guarantee(Stable) & Efficiency \\
\midrule
Remask & No & Yes & Yes & No & No \\
Uni-E & Yes & Yes & Yes & Yes & Yes \\
\bottomrule
\end{tabular}
\label{tab:comparison_uni_e_remask}
\end{table}

\subsection{Settings of Baselines on Diffusion Large Language Models (DLLMs) Experiments}
\label{app:dllm_exp_settings}
For all the AR models (Qwen2.5-0.5B, Qwen2.5-7B~\cite{yang2025qwen3}, Llama3.1-8B~\cite{grattafiori2024llama}) we use the official model parameters and evaluate them within the TraceRL framework~\cite{wang2025revolutionizing}. We adopt a static decoding strategy and, similar to Uni-E, decode 4 tokens at each step.

For APD~\cite{israel2025accelerating}, we follow the original paper and use Qwen2.5-0.5B as the proxy model, decoding 4 tokens per step.

For DAWN~\cite{luo2026dawn}, we set the confidence thresholds for graph construction as follows: the high-confidence threshold is 0.9, the low-confidence threshold is 0.7, and the anchor-based threshold is 0.7.

For CORE~\cite{zhai2026core}, we use their official implementation and adapt the decoding number as 4 at each step (including the decoding for the remasked tokens).

\subsection{Hyper-parameter Analysis on Diffusion Large Language Model}
\label{app:dllm_hyperparameter_study}

\subsubsection{Candidate Size Study}
\begin{wraptable}{r}{0.5\textwidth}
\centering
\caption{Influence of candidate size $k$.}
\begin{tabular}{c|cc}
\toprule
    & Accuracy & Throughoutput \\
\midrule
k=2 & 38.6    & 40.1          \\
k=3 & 39.2    & 38.8          \\
k=4 & 40.1    & 35.5          \\
k=5 & 39.8    & 31.7          \\
\bottomrule
\end{tabular}
\label{tab:dllm_hyperparameter_k}
\end{wraptable}
We study the impact of the candidate size $k$ on model performance. In this experiment, we use MATH500 as the evaluation benchmark and adopt LLaDa-8B as the base diffusion large language model (DLLM). The results are summarized in Table~\ref{tab:dllm_hyperparameter_k}. We observe that varying the candidate size has only a marginal effect on performance. However, increasing $k$ leads to a noticeable reduction in decoding speed.

\subsubsection{Energy-based Decoding Ratio Study}
\begin{wraptable}{r}{0.5\textwidth}
\centering
\caption{Influence of energy sampling ratio $w$.}
\begin{tabular}{c|cc}
\toprule
    & Accuracy & Throughoutput \\
\midrule
w=0.1 & 37.0    & 43.9          \\
w=0.2 & 39.2    & 38.8          \\
w=0.4 & 41.5    & 27.4          \\
w=0.8 & 42.3    & 11.8          \\
\bottomrule
\end{tabular}
\label{tab:dllm_hyperparameter_w}
\end{wraptable}
By default, we apply energy-based sampling to the first 20\% of the inference steps. To study the effect of this ratio, we vary the energy sampling ratio $w$ and evaluate performance when applying energy sampling to the first 10\%, 20\%, 40\%, and 80\% of the steps. We conduct the experiments on MATH500 using LLaDa-8B as the base model. The results are presented in Table~\ref{tab:dllm_hyperparameter_w}. We find that increasing the sampling ratio generally improves performance. However, this comes at the cost of significantly reduced decoding efficiency. Therefore, we select $w = 0.2$ as a balanced setting between performance and speed.


\subsection{Benchmark Performance of Diffusion Large Language Models (DLLMs)}
\label{app:dllm_bench_full}
We provide the full results for DLLMs in Table~\ref{app_tab:unie_results}. 
\begin{table*}[h!]
\vspace{-3pt}
\centering
\footnotesize
\caption{Several benchmark results. Improvements of applying Uni-Energy to their backbone models are shown in green. $^{\dagger}$ taken from~\cite{wang2025revolutionizing}. $^{\ddagger}$ taken from ~\cite{yang2025qwen3}. $^{*}$ denotes reproduced results.}
\vspace{-5pt}
\setlength{\tabcolsep}{4pt}
\begin{tabular}{l|cc|ccc|cc}
\toprule
Model 
& MATH500 & GSM8K 
& MBPP & LCB V2 & LiveBench 
& MMLU & HellaSwag \\
\midrule

Llama3.1-8B$^{\dagger}$ 
& 51.9 & 84.5 & 42.4 & 20.0 & 19.7 & 66.6 & 76.7 \\

Qwen2.5-7B$^{\dagger}$ 
& 74.0 & 89.9 & 74.9 & 26.9 & 31.1 & 74.2 & 80.2 \\

Qwen2.5-0.5B$^{\ddagger}$
& 33.8 & 41.6 & 39.3 & 10.4 & 12.1 & 47.5 & 52.1 \\

\midrule
LLaDa-8B$^{*}$ 
& 35.2 & 78.1 & 34.4 & 5.8 & 4.1 & 57.3 & 68.2 \\

+DAWN$^{*}$
& 38.3 (\textcolor{darkgreen}{+3.1}) 
& 79.0 (\textcolor{darkgreen}{+0.9}) 
& 33.6 (\textcolor{darkred}{-0.8}) 
& 5.5 (\textcolor{darkred}{-0.3}) 
& 4.8 (\textcolor{darkgreen}{+0.7}) 
& 58.6 (\textcolor{darkgreen}{+1.3}) 
& 68.8 (\textcolor{darkgreen}{+0.6}) \\

+APD$^{*}$
& 36.8 (\textcolor{darkgreen}{+1.6}) 
& 77.5 (\textcolor{darkred}{-0.6}) 
& 32.6 (\textcolor{darkred}{-1.8}) 
& 7.1 (\textcolor{darkgreen}{+1.3}) 
& 6.3 (\textcolor{darkgreen}{+2.2}) 
& 60.5 (\textcolor{darkgreen}{+3.2}) 
& 69.6 (\textcolor{darkgreen}{+1.4}) \\

+CORE$^{*}$
& 35.6 (\textcolor{darkgreen}{+0.4}) 
& 78.7 (\textcolor{darkgreen}{+0.6}) 
& 33.9 (\textcolor{darkred}{-0.5}) 
& 4.9 (\textcolor{darkred}{-0.9}) 
& 3.0 (\textcolor{darkred}{-1.1}) 
& 56.5 (\textcolor{darkred}{-0.8}) 
& 68.0 (\textcolor{darkred}{-0.2}) \\

\rowcolor{gray!15}
+Ind-Energy
& 37.6 (\textcolor{darkgreen}{+2.4}) 
& 79.5 (\textcolor{darkgreen}{+1.4}) 
& 34.9 (\textcolor{darkgreen}{+0.5}) 
& 9.2 (\textcolor{darkgreen}{+3.4}) 
& 8.6 (\textcolor{darkgreen}{+4.5}) 
& 62.7 (\textcolor{darkgreen}{+5.4}) 
& 69.9 (\textcolor{darkgreen}{+1.7}) \\

\rowcolor{gray!15}
+Inv-Energy
& 36.5 (\textcolor{darkgreen}{+1.3}) 
& 78.7 (\textcolor{darkgreen}{+0.6}) 
& 35.0 (\textcolor{darkgreen}{+0.6}) 
& 7.8 (\textcolor{darkgreen}{+2.0}) 
& 5.7 (\textcolor{darkgreen}{+1.6}) 
& 59.4 (\textcolor{darkgreen}{+2.1}) 
& 69.0 (\textcolor{darkgreen}{+0.8}) \\

\rowcolor{gray!15}
+Uni-Energy
& 39.2 (\textcolor{darkgreen}{+4.0}) 
& 80.1 (\textcolor{darkgreen}{+2.0}) 
& 35.2 (\textcolor{darkgreen}{+0.8}) 
& 10.5 (\textcolor{darkgreen}{+4.7}) 
& 10.9 (\textcolor{darkgreen}{+6.8}) 
& 66.5 (\textcolor{darkgreen}{+9.2}) 
& 71.2 (\textcolor{darkgreen}{+3.0}) \\

\midrule

Dream-7B$^{*}$ 
& 34.4 & 69.7 & 55.4 & 7.5 & 7.1 & 61.8 & 70.6 \\

+DAWN$^{*}$
& 35.8 (\textcolor{darkgreen}{+1.4}) 
& 70.5 (\textcolor{darkgreen}{+0.8}) 
& 55.6 (\textcolor{darkgreen}{+0.2}) 
& 7.1 (\textcolor{darkred}{-0.4}) 
& 7.3 (\textcolor{darkgreen}{+0.2}) 
& 61.1 (\textcolor{darkred}{-0.7}) 
& 71.9 (\textcolor{darkgreen}{+1.3}) \\

+APD$^{*}$
& 35.9 (\textcolor{darkgreen}{+1.5}) 
& 71.4 (\textcolor{darkgreen}{+1.7}) 
& 55.8 (\textcolor{darkgreen}{+0.4}) 
& 9.0 (\textcolor{darkgreen}{+1.5}) 
& 8.2 (\textcolor{darkgreen}{+1.1}) 
& 61.3 (\textcolor{darkred}{-0.5}) 
& 71.3 (\textcolor{darkgreen}{+0.7}) \\

+CORE$^{*}$
& 34.6 (\textcolor{darkgreen}{+0.2}) 
& 70.1 (\textcolor{darkgreen}{+0.4}) 
& 56.0 (\textcolor{darkgreen}{+0.6}) 
& 6.3 (\textcolor{darkred}{-1.2}) 
& 6.5 (\textcolor{darkred}{-0.6}) 
& 62.2 (\textcolor{darkgreen}{+0.4}) 
& 69.8 (\textcolor{darkred}{-0.8}) \\


\rowcolor{gray!15}
+Ind-Energy
& 35.7 (\textcolor{darkgreen}{+1.3}) 
& 71.3 (\textcolor{darkgreen}{+1.6}) 
& 56.7 (\textcolor{darkgreen}{+1.3}) 
& 9.8 (\textcolor{darkgreen}{+2.3}) 
& 9.3 (\textcolor{darkgreen}{+2.2}) 
& 62.4 (\textcolor{darkgreen}{+0.6}) 
& 72.6 (\textcolor{darkgreen}{+2.0}) \\

\rowcolor{gray!15}
+Inv-Energy
& 35.2 (\textcolor{darkgreen}{+0.8}) 
& 70.2 (\textcolor{darkgreen}{+0.5}) 
& 56.3 (\textcolor{darkgreen}{+0.9}) 
& 8.1 (\textcolor{darkgreen}{+0.6}) 
& 8.8 (\textcolor{darkgreen}{+1.7}) 
& 62.7 (\textcolor{darkgreen}{+0.9}) 
& 71.4 (\textcolor{darkgreen}{+0.8}) \\

\rowcolor{gray!15}
+Uni-Energy
& 36.4 (\textcolor{darkgreen}{+2.0}) 
& 71.9 (\textcolor{darkgreen}{+2.2}) 
& 57.2 (\textcolor{darkgreen}{+1.8}) 
& 11.1 (\textcolor{darkgreen}{+3.6}) 
& 9.9 (\textcolor{darkgreen}{+2.8}) 
& 63.2 (\textcolor{darkgreen}{+1.4}) 
& 73.7 (\textcolor{darkgreen}{+3.1}) \\

\bottomrule
\end{tabular}
\label{app_tab:unie_results}
\vspace{-2pt}
\end{table*}

\subsection{Case Study}
\label{app:case_study}
To visualize the influence of invariance and independency to text generation, we conduct a case study. We use Uni-EDLM-AR and MDLM for this study. We find that Uni-E generally capture the invariance and the independency correctly and we provide two cases to demonstrate. Besides, we also provides two cases where DLM fails to model the invariance and the independency thus leads to strange generation.

\noindent \textbf{Invariance Case}: The invariant decoding case of our Uni-E is shown in Figure~\ref{app_fig:invariant_case}. In this case, the "sad" can not be fully decided without considering the later context: People "don't have opinions" and city is "not going to pay", thus the decision can not be carried out, then we can deduce the "sad". Uni-E can totally capture the invariance of these tokens and generate the reasonable text.

\begin{figure}[h!]
\centering
\includegraphics[width=0.9\textwidth]{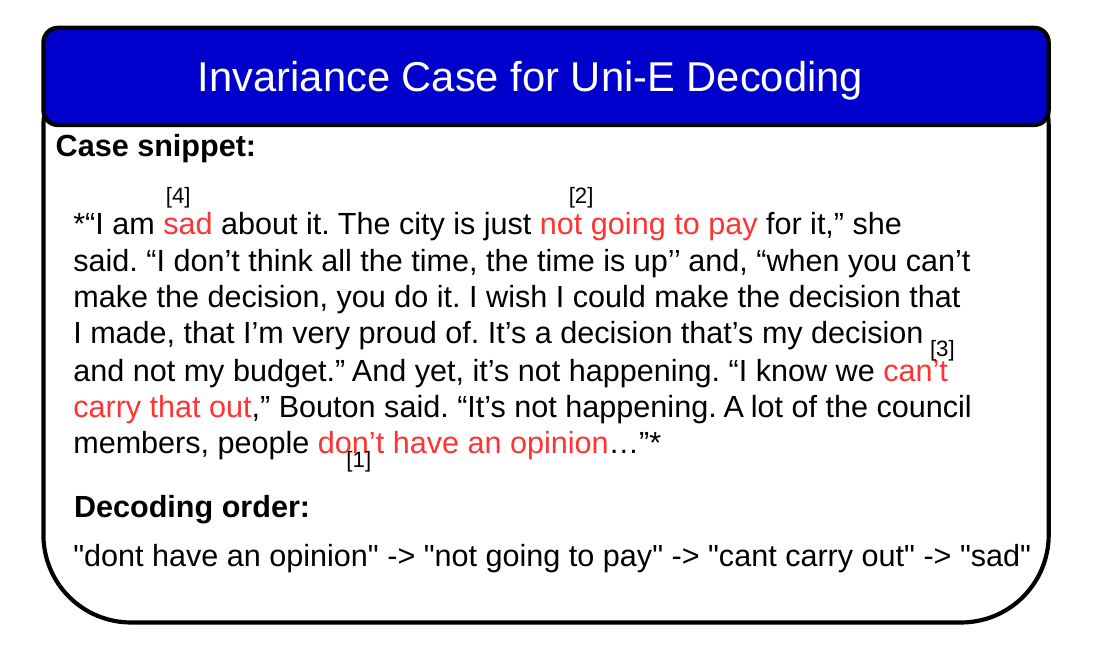}
\caption{The invariant decoding case of Uni-E.}
\label{app_fig:invariant_case}
\end{figure}

\noindent \textbf{Independency Case}: The independent decoding case of our Uni-E is shown in Figure~\ref{app_fig:independent_case}. In this case, although the "not allow" and the "comply with" are adjacent, these tokens are non-independent (If "allow it", then it will "break against" the rule; If "not allow it", then it will "comply with the rule"). Uni-E can capture the dependency between these tokens and correctly model the relationship that "not allow it" leads to "comply with".

\begin{figure}[h!]
\centering
\includegraphics[width=0.9\textwidth]{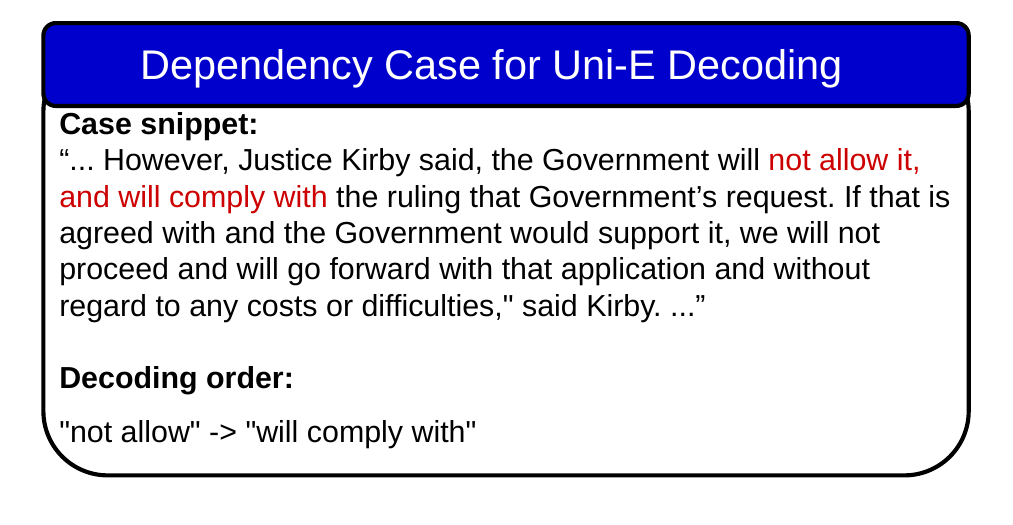}
\caption{The independent decoding case of Uni-E.}
\label{app_fig:independent_case}
\end{figure}

\noindent \textbf{Non-invariance Case}: We give one case where MDLM fails to capture the invariance. The case is shown in Figure~\ref{app_fig:non_invariant_case}. We observe that the text generated by MDLM is weird since there is conflict between "delayed the meeting" and "had a conversation", and we check the decoding order and find that the "delayed the meeting" are decoded before the "had a conversation when ... met with ..." are decoded, which breaks the invariance that the "delayed the meeting" can not be decided before considering the later context. This case shows the limitation of DLM in capturing the invariance.

\begin{figure}[h!]
\centering
\includegraphics[width=0.9\textwidth]{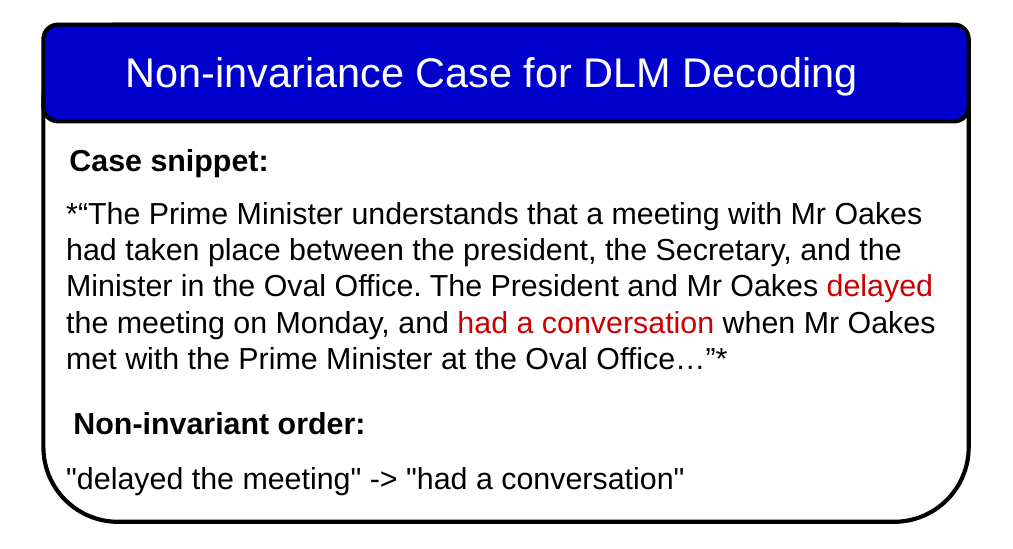}
\caption{The non-invariant decoding case of MDLM.}
\label{app_fig:non_invariant_case}
\end{figure}

\noindent \textbf{Non-independent Case}: We give one case where MDLM fails to capture the dependency. The case is shown in Figure~\ref{app_fig:non_independent_case}. The generated text is unreasonable because "keep the wealth" conflicts with "increase the money lose" and also conflicts with the word "fails". We check the decoding order which shows that the "keep", the "increase" and the "lose" are decoded simultaneously but they are non-independent with each other. Although the model succeeds modeling the invariance between the "fails" and the "increase the money lose", the text is still weird since the "fail" conflicts with "keep the wealth". This case shows the limitation of DLM in capturing the dependency.

\begin{figure}[h!]
\centering
\includegraphics[width=0.9\textwidth]{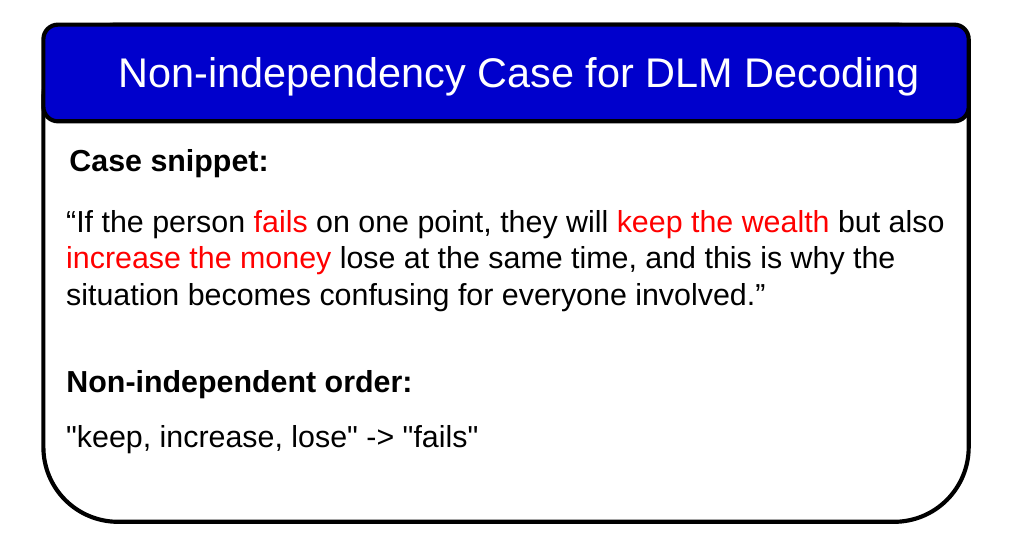}
\caption{The non-independent decoding case of MDLM.}
\label{app_fig:non_independent_case}
\end{figure}

\clearpage

\end{document}